\newtheorem{problem}{Problem}
\newtheorem{theorem}{Theorem}
\newtheorem{corollary}{Corollary}
\newtheorem{lemma}{Lemma}
\newtheorem{remark}{Remark}
\newtheorem{assumption}{Assumption}
\DeclareMathOperator*{\argmin}{\arg\!\min}
\definecolor{winered}{rgb}{0.5,0,0}
\title{\LARGE \bf
Distributed Statistical Min-Max Learning in the Presence of \\ Byzantine Agents  
}
\author{Arman Adibi\footnotemark*\thanks{*Arman Adibi and Aritra Mitra contributed equally.}, Aritra Mitra\footnotemark*, George J. Pappas and Hamed Hassani
\thanks{The authors are with the Department of Electrical and Systems Engineering, University of Pennsylvania. Email: {\tt \{aadibi, amitra20, hassani, pappasg\}@seas.upenn.edu}. This work was supported by NSF CPS Grant 1837253, ARL CRA DCIST, NSF CAREER award CIF 1943064, and the Air Force Office
of Scientific Research Young Investigator Program (AFOSR-YIP) under award FA9550-20-1-0111.}%
}
\begin{document}

\maketitle
\thispagestyle{empty}
\pagestyle{empty}

\begin{abstract}
Recent years have witnessed a growing interest in the topic of min-max optimization, owing to its relevance in the context of generative adversarial networks (GANs), robust control and optimization, and reinforcement learning. Motivated by this line of work, we consider a multi-agent min-max learning problem, and focus on the emerging challenge of contending with worst-case Byzantine adversarial agents in such a setup. By drawing on recent results from robust statistics, we design a robust distributed variant of the extra-gradient algorithm - a popular algorithmic approach for min-max optimization. Our main contribution is to provide a crisp analysis of the proposed robust extra-gradient algorithm for smooth convex-concave and smooth strongly convex-strongly concave functions. Specifically, we establish statistical rates of convergence to approximate saddle points. Our rates are near-optimal, and reveal both the effect of adversarial corruption and the benefit of collaboration among the non-faulty agents. Notably, this is the first paper to provide formal theoretical guarantees for large-scale distributed min-max learning in the presence of adversarial agents. 
\end{abstract}

\section{Introduction}
We consider a min-max learning problem of the form
\begin{equation}
    \min_{x\in\mathcal{X}} \max_{y\in\mathcal{Y}} f(x,y) \triangleq \mathbb{E}_{\xi \sim \mathcal{D}}[F(x,y;\xi)].
\label{eqn:objective}
\end{equation}
Here, $\mathcal{X}$ and $\mathcal{Y}$ are convex, compact sets in $\mathbb{R}^{n}$ and $\mathbb{R}^{m}$, respectively; $x\in\mathcal{X}$ and $y\in\mathcal{Y}$ are model parameters; $\xi$ is a random variable representing a data point sampled from the distribution $\mathcal{D}$; and $f(x,y)$ is the population function corresponding to the stochastic function $F(x,y; \xi)$. Throughout this paper, we assume that $f(x,y)$ is continuously differentiable in $x$ and $y$, and is \textit{convex-concave} over $\mathcal{X} \times \mathcal{Y}$. Specifically, $f(\cdot, y): \mathcal{X} \rightarrow \mathbb{R}$ is convex for every $y\in\mathcal{Y}$, and $f(x,\cdot): \mathcal{Y} \rightarrow \mathbb{R}$ is concave for every $x \in \mathcal{X}$. Our goal is to find a saddle point $(x^*,y^*)$ of $f(x,y)$ over the set $\mathcal{X} \times \mathcal{Y}$, where a saddle point is defined as a vector pair $(x^*,y^*) \in \mathcal{X} \times \mathcal{Y}$ that satisfies
\begin{equation}
    f(x^*,y) \leq f(x^*,y^*) \leq f(x,y^*), \forall x \in \mathcal{X}, y \in \mathcal{Y}.
\label{eqn:saddle_point}
\end{equation}

The min-max optimization problem described above features in a variety of applications: from classical developments in game theory \cite{von} and online learning \cite{cesa}, to robust optimization \cite{ben} and reinforcement learning \cite{dai}. More recently, in the context of machine learning,  min-max problems have found important applications in training generative adversarial networks (GANs) \cite{goodfellow,arjovsky}, and in robustifying deep neural networks against adversarial attacks \cite{madry}. Motivated by this recent line of work, we consider a min-max learning problem of the form in Eq. \eqref{eqn:objective}, where the data samples required for finding a saddle-point are distributed across multiple devices (agents). Specifically, we focus on a large-scale distributed setup comprising of $M$ agents, each of which can access i.i.d. data samples from the distribution $\mathcal{D}$. The agents collaborate under the orchestration of a central server to compute an approximate saddle point of statistical accuracy higher relative to the setting when they act alone. The intuition here is simple: since all agents receive data samples from the \textit{same} distribution, exchanging information via the server can help reduce the randomness (variance) associated with these samples.\footnote{This intuition will be made precise in Section \ref{sec:Results}.} An example of the above setup that aligns with the modern federated learning  paradigm is one where  multiple devices (e.g., cell phones or tablets) collaborate via a server to train a robust statistical model; see, for instance, \cite{farnia}. 

To reap the benefits of collaboration in modern distributed computing systems, one needs to contend with the critical challenge of \textit{security}. In particular, this challenge arises from the fact that the individual agents in such systems are easily susceptible to adversarial attacks. In fact, unless appropriately accounted for, even a single malicious agent can severely degrade the overall performance of the system by sending corrupted messages to the central server. 

\textbf{Objective.} Thus, given the emerging need for security in large-scale computing, \textit{our objective in this paper is to design an algorithm that achieves  near-optimal statistical performance in the context of distributed min-max learning, while being robust to worst-case attacks}. To that end, we consider a setting where a fraction of the agents is Byzantine \cite{lamport}. Each Byzantine agent is assumed to have complete knowledge of the system and learning algorithms; moreover, leveraging such knowledge, the Byzantine agents can send arbitrary messages to the server and collude with each other. 

\textbf{Challenges.} Even in the absence of noise or attacks, recent work \cite{daskalakis} has shown that algorithms such as gradient descent ascent (\texttt{GDA}) can diverge for simple convex-concave functions. We have to contend with both noise (due to our statistical setup) \textit{and} worst-case attacks - this makes the analysis for our setting non-trivial. In particular, the adversarial agents can introduce complex probabilistic dependencies across iterations that need to be carefully accounted for; we do so in this work by making the following contributions. 

\textbf{Contributions.} Our contributions are summarized below.

$\bullet$ \textit{Problem.} Given the importance and relevance of security, several recent works have studied distributed optimization/learning in the face of adversarial agents. However, we are unaware of any analogous paper for adverarially-robust distributed \textit{min-max} learning. Our work closes this gap.

$\bullet$ \textit{Algorithm.} In Section \ref{sec:Algo}, we develop an algorithm for finding an approximate saddle point to the min-max learning  problem in Eq.~\eqref{eqn:objective}, subject to the presence of Byzantine agents. Our proposed algorithm - called Robust Distributed Extra-Gradient (\texttt{RDEG}) - brings together two separate algorithmic ideas: (i) the classical extra-gradient algorithm due to Korpelevich \cite{korpelevich} that has gained a lot of popularity due to its empirical performance in training GANs, and (ii) the recently proposed univariate trimmed mean estimator due to Lugosi and Mendelson \cite{lugosi}. 

$\bullet$ \textit{Theoretical Results.} Our main contribution is to provide a rigorous theoretical analysis of the performance of \texttt{RDEG} for smooth convex-concave (Theorem \ref{thm:CC}), and smooth strongly convex-strongly concave (Theorem \ref{thm:SC}) settings. In each case, we establish that as long as the fraction of corrupted agents is ``small", \texttt{RDEG} guarantees convergence to approximate saddle points at \textit{near-optimal} statistical rates with high probability. The rates that we derive precisely highlight the benefit of collaboration in effectively reducing the variance of the noise model. At the same time, they indicate the (unavoidable) additive bias introduced by adversarial corruption. Notably, our results in the context of min-max learning complement those of a similar flavor in \cite{yin} for stochastic optimization under attacks. However, our analysis differs significantly from that in \cite{yin}: unlike the covering argument employed in \cite{yin}, our proofs rely on a simpler, and more direct probabilistic analysis. An immediate benefit of such an analysis is that one can build on it for the more challenging nonconvex-nonconcave setting as future work. 

\textbf{Related Work.} In what follows, we discuss connections to relevant strands of literature.

$\bullet$ \textit{Min-Max Optimization.} Convergence guarantees of first-order algorithms for saddle point problems over compact sets were studied in \cite{nemirov} and \cite{nedic}. More recently, there has been a surge of interest in analyzing the performance of such algorithms from different perspectives: a dynamical systems approach in \cite{liang,daskalakis2}, and a proximal point perspective in \cite{aryan}. We refer to \cite{linsurv} for a detailed survey on this topic. 

$\bullet$ \textit{Robust Distributed Optimization and Learning.} Robustness to adversarial agents in distributed optimization has been extensively studied in \cite{su_vaidya,sundaram,ravi}. However, these works consider deterministic settings, and do not provide statistical error rates like we do. In the context of statistical learning over a server-client computing architecture, several works have proposed and analyzed robust algorithms \cite{blanchard,yin,chen,chensu,pillutla}. Notably, none of the above works consider the min-max learning problem studied in this paper. 

$\bullet$ \textit{Robust Statistics.} Robust mean estimation in the presence of outliers is a classical topic in statistics pioneered by Huber \cite{huber,huber2}, with follow-up work in \cite{minsker,cheng}. In our work, we exploit some recent results on this topic from \cite{lugosi}. 

\section{Problem Formulation}
\label{sec:prob}
\begin{figure}[t]
\centering
  \includegraphics[width=0.95\linewidth]{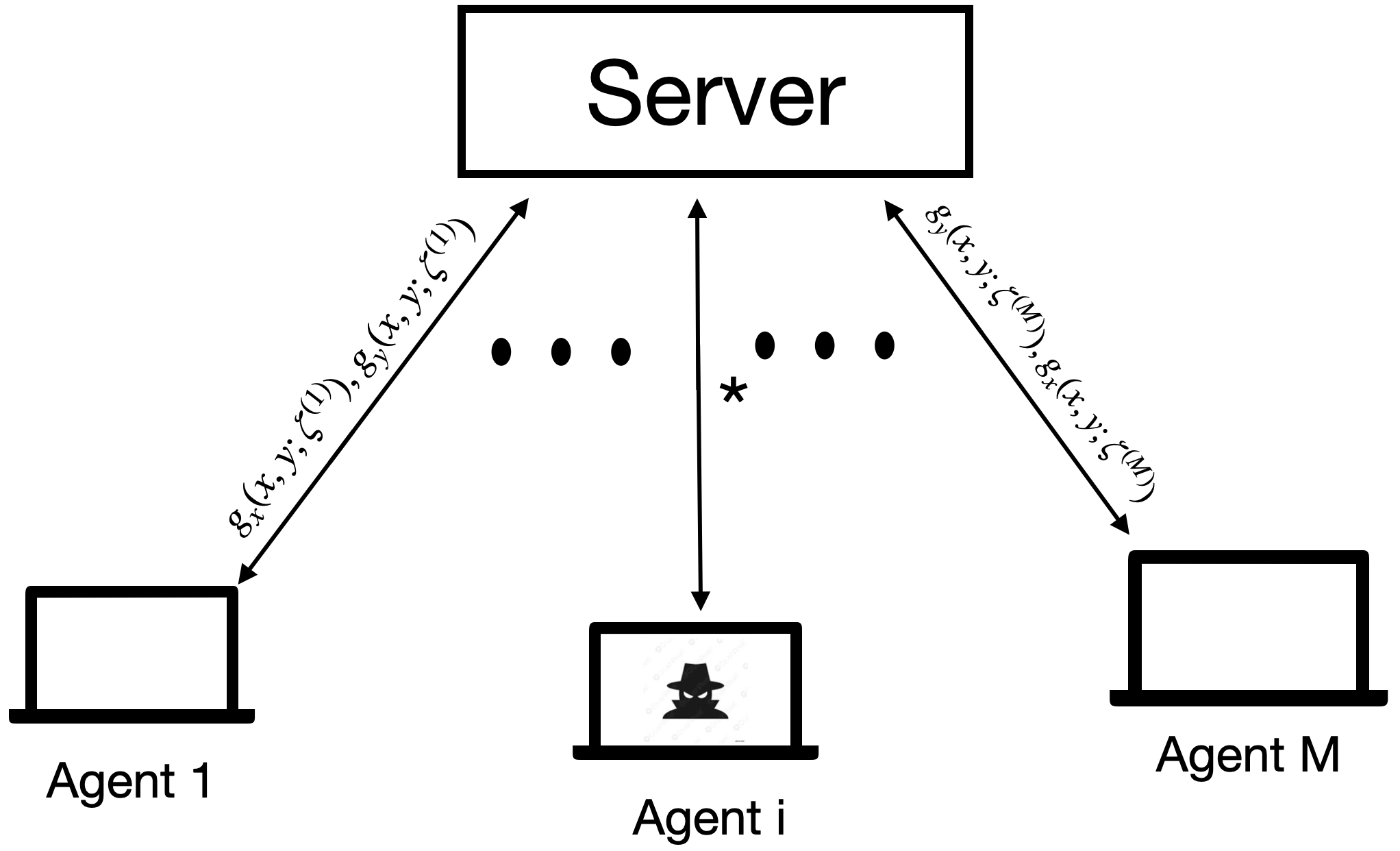}
  \caption{A group of $M$ agents collaborate to find a saddle point for the min-max learning problem in Eq.~\eqref{eqn:objective}. A fraction $\alpha$ of the agents is adversarial and upload arbitrarily corrupted messages (denoted by $*$) to the server. All the remaining good agents upload noisy partial gradients of $f(x,y)$.}
  \label{fig:model}
\end{figure} 
In this section, we formally set up the problem of interest by first introducing some notation. Our setting comprises of $M$ agents, $\alpha M$ of whom are Byzantine; see Fig.~\ref{fig:model}. We denote the adversarial agents by $\mathcal{B}\in [M]$.\footnote{Given a positive integer $N$, we use $[N]$ to represent the set $\{1,\ldots, N\}$.} For any $\bar{x} \in \mathcal{X}$ and $\bar{y}\in\mathcal{Y}$, let  $\nabla_x f(\bar{x},\bar{y})$ and $\nabla_y f(\bar{x},\bar{y})$ denote the gradient of $f(x,y)$ with respect to $x$ and $y$, respectively, at $(\bar{x},\bar{y})$. Upon drawing a sample $\xi \sim \mathcal{D}$ at a point $(\bar{x},\bar{y})$, each normal agent receives noisy estimates of $\nabla_x f(\bar{x},\bar{y})$ and $\nabla_y f(\bar{x},\bar{y})$ denoted by $g_x(\bar{x},\bar{y};  \xi)$ and $g_y(\bar{x},\bar{y}; \xi)$, respectively. For each normal agent in $[M]\setminus\mathcal{B}$, these noisy estimates satisfy the following for all $\bar{x} \in \mathcal{X}$ and $\bar{y} \in \mathcal{Y}$:
\begin{equation}
\begin{aligned}
\mathbb{E}_{\xi \sim \mathcal{D}} [g_x(\bar{x},\bar{y}; \xi)]&=\nabla_x f(\bar{x},\bar{y}) \\ \mathbb{E}_{\xi \sim \mathcal{D}}[g_y(\bar{x},\bar{y}; \xi)]&=\nabla_y f(\bar{x},\bar{y}). 
\end{aligned}
\label{eqn:stoch1}
\end{equation}
Furthermore, $\forall j \in [n]$ and $\forall k \in [m]$, we have 
\begin{equation}
\begin{aligned}
\mathbb{E}_{\xi \sim \mathcal{D}}\left[{\Vert [g_x(\bar{x},\bar{y};\xi)]_j - [\nabla_x f(\bar{x},\bar{y})]_j \Vert}^2\right] &\leq \sigma_x^2(j) \\ 
\mathbb{E}_{\xi \sim \mathcal{D}}\left[{\Vert [g_y(\bar{x},\bar{y};\xi)]_k - [\nabla_y f(\bar{x},\bar{y})]_k \Vert}^2\right] &\leq \sigma_y^2(k),
\end{aligned}
\label{eqn:stoch2}
\end{equation}
where we used $[a]_j$ to represent the $j$-th component of a vector $a$.\footnote{We use $\Vert \cdot \Vert$ to represent the Euclidean norm.} In words, each normal agent receives unbiased estimates of the gradients of $f(x,y)$ (w.r.t. $x$ and $y$) with component-wise bounded variance - essentially, a standard stochastic oracle model. With a slight abuse of notation, we will continue to use $\{g_x({x},{y}; \xi), g_y({x},{y}; \xi)\}$ to denote the gradients transmitted by an adversarial agent as well; these could, however, be arbitrary corrupted vectors. Our problem of interest can now be stated as follows.

\begin{problem}
\label{prob:prob}
Given access to the stochastic oracle model described by equations \eqref{eqn:stoch1} and \eqref{eqn:stoch2}, design a distributed algorithm that finds a saddle point (in the sense of Eq.~\eqref{eqn:saddle_point}) for the function $f(x,y)$ in Eq.~\eqref{eqn:objective}, despite the presence of the Byzantine adversarial set $\mathcal{B}$.  
\end{problem}

In the next section, we will develop our proposed algorithm to address Problem \ref{prob:prob}. 
\section{Robust Distributed Extra-Gradient}
\label{sec:Algo}

\begin{algorithm}[t]
\caption{Robust Distributed Extra-Gradient  (\texttt{RDEG})} 
\label{algo:RDEG}
\begin{algorithmic}[1]
\Require Initial vectors $x_1 \in \mathcal{X}$, $y_1 \in \mathcal{Y}$; algorithm parameters: step-size $\eta > 0$ and trimming parameter $\epsilon$.
\For {$t=1,\ldots,T$} 
\State Server sends $(x_t, y_t)$ to each agent.
\State Each \textit{normal} agent $i$ draws an i.i.d. sample $\xi^{(i)}_{1,t}  \sim \mathcal{D}$, and transmits $g_x({x}_t,{y}_t;\xi^{(i)}_{1,t})$,  $g_y({x}_t,{y}_t;\xi^{(i)}_{1,t})$ to  server.\footnotemark
\State Server computes robust gradients: 
\begin{equation}
\begin{aligned}
    \tilde{g}_x(x_t,y_t) &\gets \texttt{Trim}_{\epsilon}\{g_x({x}_t,{y}_t; \xi^{(i)}_{1,t}) : i \in [M]\} \\
    \tilde{g}_y(x_t,y_t) &\gets \texttt{Trim}_{\epsilon}\{g_y({x}_t,{y}_t; \xi^{(i)}_{1,t}) : i \in [M]\}.
\end{aligned}
\label{eqn:rob_grad1}
\end{equation}
\State Server computes mid-points $(\hat{x}_t, \hat{y}_t)$ as follows, and transmits them to each agent.
\begin{equation}
\begin{aligned}
    \hat{x}_t &\gets \Pi_{\mathcal{X}}\left(x_t-\eta \tilde{g}_x(x_t,y_t) \right) \\
    \hat{y}_t &\gets \Pi_{\mathcal{Y}}\left(y_t+\eta \tilde{g}_y(x_t,y_t) \right).
\end{aligned}
\label{eqn:mid_point}
\end{equation}
\State Each \textit{normal} agent $i$ draws an i.i.d. sample $\xi^{(i)}_{2,t}  \sim \mathcal{D}$, and transmits $g_x(\hat{x}_t,\hat{y}_t;\xi^{(i)}_{2,t})$,  $g_y(\hat{x}_t,\hat{y}_t;\xi^{(i)}_{2,t})$ to server.
\State Server computes robust gradients: 
\begin{equation}
\begin{aligned}
    \tilde{g}_x(\hat{x}_t,\hat{y}_t) &\gets \texttt{Trim}_{\epsilon}\{g_x(\hat{x}_t,\hat{y}_t; \xi^{(i)}_{2,t}) : i \in [M]\} \\
    \tilde{g}_y(\hat{x}_t,\hat{y}_t) &\gets \texttt{Trim}_{\epsilon}\{g_y(\hat{x}_t,\hat{y}_t; \xi^{(i)}_{2,t}) : i \in [M]\}.
\end{aligned}
\label{eqn:rob_grad2}
\end{equation}
\State Server computes new updates $x_{t+1}$ and $y_{t+1}$:
\begin{equation}
\begin{aligned}
    {x}_{t+1} &\gets \Pi_{\mathcal{X}}\left(x_t-\eta \tilde{g}_x(\hat{x}_t,\hat{y}_t)\right) \\
    {y}_{t+1} &\gets \Pi_{\mathcal{Y}}\left(y_t+\eta \tilde{g}_x(\hat{x}_t,\hat{y}_t)\right).
\end{aligned}
\label{eqn:end_point}
\end{equation}
\EndFor
\end{algorithmic}
\end{algorithm} 
\footnotetext{Recall that $\{g_x({x}_t,{y}_t; \xi^{(i)}_{1,t}), g_y({x}_t,{y}_t; \xi^{(i)}_{1,t})\}$ could be arbitrary vectors for an adversarial agent $i\in \mathcal{B}$.}

In this section, we develop the Robust Distributed Extra-Gradient (\texttt{RDEG}) algorithm outlined in Algorithm \ref{algo:RDEG}. Our algorithm evolves in discrete-time iterations $t\in [T]$, where $T$ is the total number of iterations. 
 There are two main steps in \texttt{RDEG}. In the first step, the server computes robust gradient estimates $\{\tilde{g}_x(x_t,y_t), \tilde{g}_y(x_t,y_t)\}$ at the current iterate $(x_t, y_t)$  by applying a \texttt{Trim} operator to the gradients collected from all agents (line 4); we will describe this operator shortly.  The robust gradient estimates are then used to compute a mid-point $(\hat{x}_t,\hat{y}_t)$ by performing a projected primal-dual update (line 5). In the second step, the server now computes robust gradients at the mid-point (line 7), and performs a projected primal-dual update using these gradients to generate the next iterate $(x_{t+1},y_{t+1})$. We now describe the \texttt{Trim} operation.

The \texttt{Trim} operator in equations \eqref{eqn:rob_grad1} and \eqref{eqn:rob_grad2} takes as input $M$ vectors, and applies the univariate trimmed mean estimator in \cite{lugosi} - described in Algorithm \ref{alg:trim} - to each coordinate of these vectors separately. To describe the trimmed mean estimator, suppose the data comprises of $M$ independent copies of a scalar random variable $Z$ with mean $\mu_Z$ and variance $\sigma^2_Z$. An adversary corrupts at most $\alpha M$ of these copies; the corrupted data-set is then made available to the estimator. The estimator splits the corrupted data set into two equal chunks, denoted by $Z_1, \ldots, Z_{M/2}$, $\tilde{Z}_1, \ldots \tilde{Z}_{M/2}$. One of the chunks is used to compute appropriate quantile levels for truncation (line 2 of Algo. \ref{alg:trim}). The robust estimate $\hat{\mu}_Z$ of $\mu_Z$ is an average of the data points in the other chunk, with those data points falling outside the estimated quantile levels truncated prior to averaging (line 3 of Algo. \ref{alg:trim}). 

\begin{algorithm}
\caption{Univariate Trimmed-Mean Estimator \cite{lugosi}}\label{alg:trim}
\begin{algorithmic}[1]
\Require Corrupted data set $Z_1, \ldots, Z_{M/2}$, $\tilde{Z}_1, \ldots \tilde{Z}_{M/2}$, corruption fraction $\alpha$, and confidence level $\delta$. 
\State Set $\epsilon = 8 \alpha + 24\frac{\log(4/\delta)}{M}$.
\State Let $Z^*_1 \leq Z^*_2 \leq \cdots \leq  Z^*_{M/2}$ represent a non-decreasing arrangement of $\{Z_i\}_{i\in [M/2]}$. Compute quantiles: $\gamma = Z^*_{\epsilon M/2}$ and $\beta=Z^*_{(1-\epsilon) M/2}$.  
\State Compute robust mean estimate $\hat{\mu}_Z$ as follows:
\begin{equation*}
   \hat{\mu}_Z = \frac{2}{M}\sum_{i = 1}^{M/2} \phi_{\gamma, \beta}(\tilde{Z}_i);  
\phi_{\gamma, \beta}(x) = \begin{cases}
\beta & x>\beta\\
x & x \in [\gamma, \beta] \\ 
\gamma & x < \gamma
\end{cases} 
\end{equation*}
\end{algorithmic}
\end{algorithm}

The following result on the performance of Algorithm \ref{alg:trim} will play a key role in our subsequent analysis of \texttt{RDEG}. 

\begin{theorem} \cite[Theorem 1]{lugosi} 
\label{thm:lugosi}
Consider the trimmed mean estimator in Algorithm \ref{alg:trim}. Suppose $\alpha \in [0,1/16)$, and let $\delta \in (0,1)$ be such that $\delta \geq 4 e^{-M/2}$. Then, there exists an universal constant $c$, such that with probability at least $1-\delta$,
$$ |\hat{\mu}_Z - \mu_Z| \leq c \sigma_Z\left(\sqrt{\alpha}+\sqrt{\frac{\log(1/\delta)}{M}} \right). $$
\end{theorem}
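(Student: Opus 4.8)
The plan is to exploit the sample-splitting structure of Algorithm \ref{alg:trim}: the first half $Z_1,\ldots,Z_{M/2}$ is used only to fix the truncation thresholds $\gamma,\beta$, while the robust estimate is an average of the \emph{truncated} second half $\tilde{Z}_1,\ldots,\tilde{Z}_{M/2}$. Conditioning on the first half (hence on $\gamma,\beta$), the clean points of the second half are i.i.d.\ and independent of the thresholds, which lets me decompose the error into three manageable pieces: (i) the random fluctuation of the empirical quantiles $\gamma,\beta$ around population quantiles; (ii) the bias introduced by truncating a clean sample; and (iii) the perturbation caused by the at most $\alpha M$ corrupted entries. Throughout, let $Q_p$ denote the population $p$-quantile of $Z$, and recall that Chebyshev's inequality gives $\mu_Z - Q_p \le \sigma_Z/\sqrt{p}$ and $Q_{1-p}-\mu_Z \le \sigma_Z/\sqrt{p}$, the key quantitative link between a quantile level and $\sigma_Z$.

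First I would establish \emph{quantile sandwiching}. Because the adversary corrupts at most $\alpha M$ first-half samples and the trimming level is chosen as $\epsilon = 8\alpha + 24\log(4/\delta)/M$ --- strictly larger than the corruption fraction plus a $\log(4/\delta)/M$ slack --- a binomial/Chernoff tail bound on the order statistics of the clean first-half points shows that, on an event of probability at least $1-\delta/2$, the empirical thresholds $\gamma,\beta$ are trapped between population quantiles at levels that are \emph{constant multiples} of $\epsilon$ (on the left tail for $\gamma$, on the right tail for $\beta$). The $24\log(4/\delta)/M$ term absorbs the sampling error of the order statistics, while $8\alpha$ dominates the displacement a malicious fraction $\alpha$ can cause. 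Combined with the Chebyshev bound above, this yields the crucial range estimate $\beta-\gamma \lesssim \sigma_Z/\sqrt{\epsilon}$.

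Next, conditioned on this good event, I would control the \emph{clean} truncated average $\hat\mu^{\mathrm{cl}} = \frac{2}{M}\sum_{i=1}^{M/2}\phi_{\gamma,\beta}(\tilde{Z}_i)$ computed as though every second-half point were uncorrupted. Its conditional bias is $|\mathbb{E}[\phi_{\gamma,\beta}(Z)]-\mu_Z|$, which by Cauchy--Schwarz is at most $\sigma_Z\sqrt{\mathbb{P}(Z>\beta)} + \sigma_Z\sqrt{\mathbb{P}(Z<\gamma)} \lesssim \sigma_Z\sqrt{\epsilon}$, since the thresholds sit at quantile levels of order $\epsilon$. For concentration around this conditional mean I would apply Bernstein's inequality to the i.i.d., bounded variables $\phi_{\gamma,\beta}(\tilde{Z}_i)$, whose variance is at most $\sigma_Z^2$ and whose range is $\beta-\gamma\lesssim\sigma_Z/\sqrt{\epsilon}$; this gives, with probability $1-\delta/2$, a deviation of order $\sigma_Z\sqrt{\log(1/\delta)/M} + \frac{\sigma_Z}{\sqrt{\epsilon}}\frac{\log(1/\delta)}{M}$, and the lower bound $\epsilon\ge 24\log(4/\delta)/M$ forces the second term back down to order $\sigma_Z\sqrt{\log(1/\delta)/M}$. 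Finally, the corruption is handled by noting that changing the at most $\alpha M$ corrupted coordinates moves each truncated value by no more than $\beta-\gamma$, so $|\hat\mu_Z-\hat\mu^{\mathrm{cl}}|\le 2\alpha(\beta-\gamma)\lesssim \alpha\,\sigma_Z/\sqrt{\epsilon}\le \sigma_Z\sqrt{\alpha}$, where the last step uses $\epsilon\ge 8\alpha$. Collecting the three bounds and noting $\sqrt{\epsilon}\lesssim \sqrt{\alpha}+\sqrt{\log(1/\delta)/M}$ delivers the claim after a union bound over the two good events.

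The main obstacle I anticipate is the quantile-sandwiching step, since it is the only place where the adversarial corruption, the finite-sample randomness of the order statistics, and the specific constants in $\epsilon$ all interact. Getting the constants to line up so that the empirical quantiles are trapped between population quantiles at levels that are constant multiples of $\epsilon$ --- neither collapsing toward $\mu_Z$ nor blowing up --- is exactly what makes the range bound $\beta-\gamma\lesssim\sigma_Z/\sqrt{\epsilon}$ valid, and it is precisely this bound that ties together the corruption term $\sqrt{\alpha}$ and the statistical term $\sqrt{\log(1/\delta)/M}$. The remaining bias and Bernstein computations are routine once the thresholds are known to behave like population quantiles of level $\Theta(\epsilon)$.
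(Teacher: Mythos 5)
The paper does not prove this statement at all: it is imported verbatim from \cite[Theorem 1]{lugosi} and invoked as a black box (in Lemma \ref{lemma:high_prob}), so there is no in-paper proof to compare your attempt against. Your sketch is in fact a faithful reconstruction of the Lugosi--Mendelson argument itself --- sample splitting, sandwiching the empirical thresholds between population quantiles at levels $\Theta(\epsilon)$, the Chebyshev range bound $\beta-\gamma\lesssim\sigma_Z/\sqrt{\epsilon}$, truncation bias via Cauchy--Schwarz, Bernstein concentration for the bounded truncated variables (the contraction $\phi_{\gamma,\beta}$ keeps the variance below $\sigma_Z^2$), and the $2\alpha(\beta-\gamma)$ perturbation bound for corrupted second-half points --- and its use of the two pieces of $\epsilon=8\alpha+24\log(4/\delta)/M$ to absorb, respectively, the adversarial displacement and the order-statistic sampling error is exactly how the constants line up in that reference.
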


In the next section, we will provide rigorous guarantees on the performance of our proposed algorithm \texttt{RDEG}. 
\section{Performance Guarantees for \texttt{RDEG}}
\label{sec:Results}
Before stating our main results, we first make a standard smoothness assumption on the function $f(x,y)$.

\begin{assumption}
\label{ass:smoothness} There exists a constant $L > 0$ such that the following holds for all $x_1, x_2 \in \mathcal{X}$, and all $y_1, y_2 \in \mathcal{Y}$:
\begin{equation}
\begin{aligned}
    \Vert \nabla_x f(x_1,y_1) - \nabla_x f(x_2,y_2) \Vert &\leq L \left(\Vert x_1-x_2 \Vert + \Vert y_1 - y_2 \Vert \right), \\
    \Vert \nabla_y f(x_1,y_1) - \nabla_y f(x_2,y_2) \Vert &\leq L \left(\Vert x_1-x_2 \Vert + \Vert y_1 - y_2 \Vert \right).
\end{aligned}   
\nonumber
\end{equation}
\end{assumption}
We now define a few key quantities that will show up in our main results. Let $\sigma_x=\sqrt{\sum_{j\in[n]}\sigma_x^2(j)}$, $\sigma_y=\sqrt{\sum_{k\in[m]}\sigma_y^2(k)}$, and $\sigma=\max\{\sigma_x,\sigma_y\}$. Moreover, let $d=\max\{n,m\}$, and $D=\max\{D_x,D_y\}$, where $D_x$ and $D_y$ are the diameters of the sets $\mathcal{X}$ and $\mathcal{Y}$, respectively. With the above notations in place, we state our first main result that provides a bound on the primal-dual gap $\phi_T \triangleq \max_{y\in \mathcal{Y}} f(\bar{x}_T,y)-\min_{x\in\mathcal{X}}f(x,\bar{y}_T)$, where 
$$ \bar{x}_T=(1/T)\sum_{t\in[T]} \hat{x}_t, \hspace{1mm} \text{and} \hspace{1mm} \bar{y}_T=(1/T)\sum_{t\in[T]} \hat{y}_t.$$

\begin{theorem}
\label{thm:CC} Suppose Assumption \ref{ass:smoothness} holds, the fraction $\alpha$ of corrupted devices satisfies $\alpha \in [0,1/16)$, and the number of agents $M$ is sufficiently large: $M \geq 48 \log(16dT^2)$. Then, with a step-size $\eta$ satisfying  $\eta \leq 1/(2L)$, and the confidence parameter $\delta$ in Algorithm \ref{alg:trim} set to $\delta=1/(4dT^2)$, \texttt{RDEG} guarantees the following with probability at least $1-1/T$:
\begin{equation}
    \phi_T \leq \frac{D^2}{\eta T} + \tilde{O}\left(\sigma D \left(\sqrt{\alpha}+\sqrt{\frac{1}{M}}\right)\right).
\label{eqn:CC_dual_bnd} 
\end{equation}
\end{theorem}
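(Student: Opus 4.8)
The plan is to decouple the argument into a \emph{probabilistic} part that controls the accuracy of the trimmed-mean (robust) gradients, and a \emph{deterministic} part that runs the classical extra-gradient convergence analysis treating these estimates as inexact gradients. First I would invoke Theorem~\ref{thm:lugosi} coordinate-wise. Conditioned on the (possibly random) evaluation point, each normal agent's gradient sample is unbiased with componentwise variance bounded by $\sigma_x^2(j),\sigma_y^2(k)$ via Eqs.~\eqref{eqn:stoch1}--\eqref{eqn:stoch2}; hence with $\delta=1/(4dT^2)$ each scalar trimmed-mean estimate deviates from the true partial derivative by at most $c\,\sigma_\bullet(\cdot)(\sqrt{\alpha}+\sqrt{\log(1/\delta)/M})$ with probability at least $1-\delta$. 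Aggregating the $n$ (resp.\ $m$) coordinates through $\sigma_x=\sqrt{\sum_j \sigma_x^2(j)}$ then yields $\Vert \tilde g_x-\nabla_x f\Vert\le\Delta$ and $\Vert \tilde g_y-\nabla_y f\Vert\le\Delta$, where $\Delta=\tilde{O}\!\left(\sigma(\sqrt{\alpha}+\sqrt{1/M})\right)$.

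The delicate point -- and the step I expect to be the main obstacle -- is making these per-point bounds hold simultaneously at \emph{every} iterate visited, despite the adversary inducing probabilistic dependencies across iterations. My approach is the direct one alluded to in the introduction, avoiding any covering net over $\mathcal{X}\times\mathcal{Y}$: because each call to \texttt{Trim} at iteration $t$ (at $(x_t,y_t)$ with samples $\xi^{(i)}_{1,t}$, and at $(\hat x_t,\hat y_t)$ with the \emph{fresh} samples $\xi^{(i)}_{2,t}$) uses draws that are independent conditioned on the $\sigma$-algebra generated by the past, Theorem~\ref{thm:lugosi} applies conditionally at each random evaluation point, with a constant that does not depend on that point. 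By the tower property each of the at most $2T(n+m)\le 4dT$ scalar estimation events fails with unconditional probability at most $\delta$, and a union bound gives that all of them are accurate simultaneously with probability at least $1-4dT\delta=1-1/T$. The hypotheses $\alpha<1/16$ and $M\ge 48\log(16dT^2)$ are exactly what license Theorem~\ref{thm:lugosi} (in particular $\delta\ge 4e^{-M/2}$) and absorb $\sqrt{\log(1/\delta)/M}$ into the $\tilde{O}(1/\sqrt{M})$ term.

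I would then condition on this good event and run a deterministic inexact extra-gradient analysis. Writing $z=(x,y)$, $\mathcal{Z}\triangleq\mathcal{X}\times\mathcal{Y}$, $F(z)=(\nabla_x f(z),-\nabla_y f(z))$, and $\tilde F$ for its trimmed counterpart, the updates \eqref{eqn:mid_point} and \eqref{eqn:end_point} read $\hat z_t=\Pi_{\mathcal{Z}}(z_t-\eta\tilde F(z_t))$ and $z_{t+1}=\Pi_{\mathcal{Z}}(z_t-\eta\tilde F(\hat z_t))$. From the variational characterization of these two projections, with an arbitrary test point $z\in\mathcal{Z}$, I obtain the standard extra-gradient inequality
\begin{equation}
\begin{aligned}
\eta\langle \tilde F(\hat z_t),\hat z_t-z\rangle &\le \tfrac12\Vert z_t-z\Vert^2-\tfrac12\Vert z_{t+1}-z\Vert^2-\tfrac12\Vert \hat z_t-z_t\Vert^2 \\
&\quad -\tfrac12\Vert z_{t+1}-\hat z_t\Vert^2+\eta\langle \tilde F(\hat z_t)-\tilde F(z_t),\hat z_t-z_{t+1}\rangle.
\end{aligned}
\nonumber
\end{equation}
The cross term splits as $\tilde F(\hat z_t)-\tilde F(z_t)=\big(F(\hat z_t)-F(z_t)\big)+(\tilde F-F)(\hat z_t)-(\tilde F-F)(z_t)$; the smoothness piece is controlled by Assumption~\ref{ass:smoothness} (giving $2L$-Lipschitzness of $F$) together with Young's inequality, and the choice $\eta\le 1/(2L)$ lets it be absorbed into the two negative quadratics, while the two inexactness pieces (each $\le\sqrt{2}\,\Delta$) leave only an $O(\eta\Delta^2)$ residual.

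Finally, I would replace $\tilde F(\hat z_t)$ by the true operator $F(\hat z_t)$ at the cost of $\langle (F-\tilde F)(\hat z_t),\hat z_t-z\rangle\le \sqrt{2}\,\Delta\cdot\sqrt{2}D$ per iteration, telescope over $t\in[T]$ (the quadratics collapse to $\tfrac12\Vert z_1-z\Vert^2\le D^2$), and divide by $\eta T$. Convexity-concavity of $f$ furnishes $f(\hat x_t,y)-f(x,\hat y_t)\le\langle F(\hat z_t),\hat z_t-z\rangle$, and Jensen's inequality on the averaged iterates $\bar x_T,\bar y_T$ converts the running average into $f(\bar x_T,y)-f(x,\bar y_T)$. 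Since every bound is uniform in the test point $z$ (all $z$-dependence enters only through $\Vert z_1-z\Vert$ and $\Vert\hat z_t-z\Vert$, both at most the diameter), I may take the supremum over $z\in\mathcal{Z}$ -- i.e.\ over the maximizing $y$ and the minimizing $x$ -- to recover $\phi_T$. Collecting terms gives $\phi_T\le D^2/(\eta T)+O(\Delta D)+O(\eta\Delta^2)$, and since $\Delta=\tilde{O}(\sigma(\sqrt{\alpha}+\sqrt{1/M}))$ the $O(\Delta D)$ term dominates, yielding exactly \eqref{eqn:CC_dual_bnd}.
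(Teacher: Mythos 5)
Your proposal is correct and follows essentially the same route as the paper: a conditional (tower-property) application of the trimmed-mean guarantee at each random iterate with union bounds over coordinates and iterations (the paper's Lemma \ref{lemma:high_prob}), combined with a perturbed extra-gradient analysis built from the projection inequalities, smoothness absorption under $\eta \leq 1/(2L)$, and diameter bounds on the error inner products (the paper's Lemmas \ref{lemma:basic} and \ref{lemma:midpoints}), finished off by convexity-concavity, Jensen's inequality, and a sup over test points --- your joint-variable $z=(x,y)$ formulation is only a notational repackaging of the paper's blockwise argument. One minor fix within your own framework: to cover $\eta = 1/(2L)$ exactly (where no slack remains in the negative quadratics) and to avoid a stray $\tilde{O}(\Delta^2)$ term that need not be dominated by $\Delta D$, bound the inexactness cross term $\langle (\tilde F - F)(\hat z_t) - (\tilde F - F)(z_t), \hat z_t - z_{t+1}\rangle$ by the diameter of $\mathcal{Z}$, as the paper does, rather than by Young's inequality.
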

Proofs of all our results are deferred to Section \ref{sec:Proofs}.\footnote{In the statement of our results, we will use the $\tilde{O}(\cdot)$ notation to hide terms that are logarithmic in $n,m$, and $T$.}

\textbf{Discussion.} Theorem \ref{thm:CC} tells us that with high probability, the primal-dual gap $\phi_T$ converges to a ball of radius $\tilde{O}\left(\sigma D \left(\sqrt{\alpha}+\sqrt{{1}/{M}}\right)\right)$ at a $O(1/T)$ rate. Notably, the primal-dual gap is zero if and only if $(\bar{x}_T,\bar{y}_T)$ is a saddle point of $f(x,y)$ over the set $\mathcal{X} \times \mathcal{Y}$. Thus, \texttt{RDEG} provably generates approximate saddle points. The following result is one of the main implications of Theorem \ref{thm:CC}.

\begin{corollary}
\label{corr:avg_it}
Suppose the conditions in Theorem \ref{thm:CC} hold. Then, \texttt{RDEG} guarantees the following with probability at least $1-1/T$:
\begin{equation}
    |f(\bar{x}_T,\bar{y}_T)-f(x^*, y^*)| \leq \frac{D^2}{\eta T} + \tilde{O}\left(\sigma D \left(\sqrt{\alpha}+\sqrt{\frac{1}{M}}\right)\right).
\label{eqn:CC_finalbnd}
\end{equation}
\end{corollary}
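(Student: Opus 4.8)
The plan is to obtain Corollary~\ref{corr:avg_it} as an immediate consequence of the primal-dual gap bound established in Theorem~\ref{thm:CC}. The guiding observation is that both the function value $f(\bar{x}_T,\bar{y}_T)$ at the averaged iterate and the saddle-point value $f(x^*,y^*)$ are trapped inside one and the same interval, and that the width of this interval is exactly the quantity $\phi_T$ already controlled in Theorem~\ref{thm:CC}. Thus no new convergence analysis is required; the corollary is a purely structural deduction from the primal-dual gap.

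First I would locate $f(x^*,y^*)$ using the two halves of the saddle-point inequality in Eq.~\eqref{eqn:saddle_point}. The right half, $f(x^*,y^*)\le f(x,y^*)$ for every $x\in\mathcal{X}$, evaluated at $x=\bar{x}_T$ and combined with $f(\bar{x}_T,y^*)\le\max_{y\in\mathcal{Y}}f(\bar{x}_T,y)$, yields $f(x^*,y^*)\le\max_{y\in\mathcal{Y}}f(\bar{x}_T,y)$. Symmetrically, the left half, $f(x^*,y)\le f(x^*,y^*)$ for every $y\in\mathcal{Y}$, evaluated at $y=\bar{y}_T$ and combined with $\min_{x\in\mathcal{X}}f(x,\bar{y}_T)\le f(x^*,\bar{y}_T)$, yields $\min_{x\in\mathcal{X}}f(x,\bar{y}_T)\le f(x^*,y^*)$. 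Hence $f(x^*,y^*)$ lies in the interval $[\min_{x\in\mathcal{X}}f(x,\bar{y}_T),\,\max_{y\in\mathcal{Y}}f(\bar{x}_T,y)]$.

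Next, the averaged iterate value obeys the trivial sandwich $\min_{x\in\mathcal{X}}f(x,\bar{y}_T)\le f(\bar{x}_T,\bar{y}_T)\le\max_{y\in\mathcal{Y}}f(\bar{x}_T,y)$, since $\bar{x}_T$ is feasible in the inner minimization and $\bar{y}_T$ is feasible in the inner maximization. Because both $f(\bar{x}_T,\bar{y}_T)$ and $f(x^*,y^*)$ belong to this common interval, their difference is bounded by its width, giving $|f(\bar{x}_T,\bar{y}_T)-f(x^*,y^*)|\le\max_{y\in\mathcal{Y}}f(\bar{x}_T,y)-\min_{x\in\mathcal{X}}f(x,\bar{y}_T)=\phi_T$. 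Substituting the bound on $\phi_T$ from Theorem~\ref{thm:CC} then produces Eq.~\eqref{eqn:CC_finalbnd}, with the same $1-1/T$ probability guarantee carried over verbatim. I do not anticipate any genuine obstacle here: all of the probabilistic and optimization effort is concentrated in Theorem~\ref{thm:CC}, and the only point demanding care is keeping the two directions of the saddle-point inequalities straight so that the sandwich closes correctly.
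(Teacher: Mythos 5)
Your proposal is correct and is essentially the paper's own argument: the paper likewise deduces the corollary from the primal-dual gap control by invoking the two saddle-point inequalities $f(x^*,\bar{y}_T)\le f(x^*,y^*)\le f(\bar{x}_T,y^*)$, plugging $(x,y)=(x^*,\bar{y}_T)$ and $(\bar{x}_T,y^*)$ into the uniform bound behind Eq.~\eqref{eqn:duality}. Your interval-sandwich phrasing, which isolates the deterministic fact $|f(\bar{x}_T,\bar{y}_T)-f(x^*,y^*)|\le\phi_T$ and then applies Theorem~\ref{thm:CC} as a black box, is just a cleaner packaging of the same steps, and the probability guarantee carries over exactly as you state.
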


Corollary \ref{corr:avg_it} tells us that with high probability, 
the function values $f(\hat{x}_t,\hat{y}_t)$ of the averaged iterates generated by \texttt{RDEG} converge to the saddle-point value $f(x^*,y^*)$ up to an error-floor of  $\tilde{O}\left(\sigma D \left(\sqrt{\alpha}+\sqrt{{1}/{M}}\right)\right)$, at a $O(1/T)$ rate. There are several key messages from this result. First, in the absence of adversaries (i.e., when $\alpha=0$), the classical extra-gradient algorithm with a constant step-size would yield  convergence to the saddle-point value with an error floor of $\tilde{O}(\sigma(\sqrt{{1}/{M}}))$ at a $O(1/T)$ rate. Thus, modulo the biasing effect of the adversaries, the statistical performance of \texttt{RDEG} is \textit{near-optimal}. Second, the additive biasing effect due to adversarial corruption shows up even in the context of stochastic minimization \cite{yin}. In fact, the authors in \cite{yin} argue that an additive biasing effect of order  $\tilde{\Omega}(\alpha)$ is unavoidable, albeit for the minimization setting. This is all to say that the dependence of our rate on the corruption level in Eq. \eqref{eqn:CC_finalbnd} is only to be expected. Third, when the corruption level is small, the benefit of collaboration is evident from the second term in Eq. \eqref{eqn:CC_finalbnd}: the variance $\sigma$ arising from the noise term is effectively reduced by a factor of $\sqrt{M}$ due to the averaging effect of the normal agents. This effect will be aptly demonstrated by the simulations in Section \ref{sec:sims}. 

We now turn to the goal of achieving faster convergence rates than those in Theorem \ref{thm:CC}. To that end, we study the performance of the \texttt{RDEG} algorithm for strongly convex-strongly concave (SC-SC) functions. Accordingly, we first make the following assumption on $f(x,y)$.  
\begin{assumption}
\label{ass:Sc-Sc} The function $f(x,y)$ is $\mu$-strongly convex-$\mu$-strongly concave (SC-SC) over $\mathcal{X} \times \mathcal{Y}$, i.e., for all $x_1,x_2\in \mathcal{X}$ and $y_1,y_2\in \mathcal{Y}$, the following holds:
\begin{equation}
\resizebox{1\hsize}{!}{$
\begin{aligned}
    f(x_2,y_1) &\geq f(x_1,y_1) +\langle \nabla_{x} f(x_1,y_1) , x_2-x_1 \rangle  + \frac{\mu}{2} \|x_2-x_1\|^2, \\
    f(x_1,y_2)& \leq f(x_1,y_1) +\langle \nabla_{y} f(x_1,y_1) , y_2-y_1 \rangle  - \frac{\mu}{2} \|y_2-y_1\|^2.
\end{aligned}
$}
\nonumber
\end{equation}
 \end{assumption}

For $x\in \mathcal{X}$ and $y\in\mathcal{Y}$, define $z \triangleq [x;y]$. We have the following result for functions satisfying Assumption \ref{ass:Sc-Sc}. 
 

\begin{theorem}
\label{thm:SC} Suppose Assumptions \ref{ass:smoothness} and \ref{ass:Sc-Sc} hold in conjunction with the assumptions on $\alpha$ and $M$ in Theorem~\ref{thm:CC}. Then, with $\delta=1/(4dT^2)$ and step-size $\eta \leq 1/(4L)$, \texttt{RDEG} guarantees the following with probability at least $1-1/T$:
\begin{equation}
     \| z^{*}-z_{T+1} \|^2\leq  2e^{-\frac{T}{4\kappa}}D^2 +\tilde{O}\left(\frac{\sigma D \kappa}{L}  \left(\sqrt{\alpha}+\sqrt{\frac{1}{M}}\right)\right),
\label{eqn:SC_final_bnd}
\end{equation}
where $\kappa=\mu/L$. 
\end{theorem}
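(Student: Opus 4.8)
The plan is to recast \texttt{RDEG} as an \emph{inexact} extra-gradient scheme driven by the saddle-point operator $G(z)\triangleq[\nabla_x f(x,y);-\nabla_y f(x,y)]$ acting on the product set $\mathcal{Z}\triangleq\mathcal{X}\times\mathcal{Y}$, so that the two projected primal-dual updates collapse to $\hat{z}_t=\Pi_{\mathcal{Z}}(z_t-\eta\tilde{G}(z_t))$ and $z_{t+1}=\Pi_{\mathcal{Z}}(z_t-\eta\tilde{G}(\hat{z}_t))$, where $\tilde{G}$ is the \texttt{Trim}-based robust estimate of $G$. First I would record three structural facts: Assumption~\ref{ass:Sc-Sc} makes $G$ $\mu$-strongly monotone, i.e. $\langle G(z)-G(z'),z-z'\rangle\ge\mu\|z-z'\|^2$; Assumption~\ref{ass:smoothness} makes $G$ Lipschitz with constant $2L$; and the saddle point $z^*$ solves the variational inequality $\langle G(z^*),z-z^*\rangle\ge0$ for all $z\in\mathcal{Z}$.

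Second, I would quantify the estimation error. Applying Theorem~\ref{thm:lugosi} coordinate-by-coordinate and summing the squared per-coordinate bounds (using $\sigma_x=\sqrt{\sum_j\sigma_x^2(j)}$, etc.) yields, at a fixed evaluation point, $\|\tilde{G}(z)-G(z)\|\le\Delta$ with $\Delta=\tilde{O}(\sigma(\sqrt{\alpha}+\sqrt{1/M}))$. Since each of the $T$ iterations invokes the estimator at two points and over two coordinate blocks — at most $4d$ scalar calls — a union bound over all $4dT$ calls with $\delta=1/(4dT^2)$ produces a single good event of probability at least $1-1/T$ on which $\|\tilde{G}(\cdot)-G(\cdot)\|\le\Delta$ holds simultaneously at every iterate and mid-point; the hypothesis $M\ge48\log(16dT^2)$ is exactly what enforces $\delta\ge4e^{-M/2}$, making Theorem~\ref{thm:lugosi} applicable. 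The one subtlety I would treat carefully is the probabilistic dependence flagged in the introduction: although $z_t$ depends on the past, the samples drawn at iteration $t$ are fresh and independent of $z_t$, so conditioning on $z_t$ lets Theorem~\ref{thm:lugosi} apply at the (then deterministic) point $z_t$, and the union bound goes through after taking expectations.

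Third, and the technical core, is a one-step contraction valid on the good event. Starting from the first-order optimality (projection) inequalities $\langle z_{t+1}-z_t+\eta\tilde{G}(\hat{z}_t),z^*-z_{t+1}\rangle\ge0$ and $\langle\hat{z}_t-z_t+\eta\tilde{G}(z_t),z_{t+1}-\hat{z}_t\rangle\ge0$, I would combine them with the identity $-\|z_{t+1}-z_t\|^2+2\langle\hat{z}_t-z_t,z_{t+1}-\hat{z}_t\rangle=-\|z_{t+1}-\hat{z}_t\|^2-\|\hat{z}_t-z_t\|^2$ to obtain $\|z_{t+1}-z^*\|^2\le\|z_t-z^*\|^2-\|z_{t+1}-\hat{z}_t\|^2-\|\hat{z}_t-z_t\|^2+2\eta\langle G(\hat{z}_t),z^*-\hat{z}_t\rangle+(\text{error terms})$. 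Strong monotonicity together with the variational inequality at $z^*$ turns $2\eta\langle G(\hat{z}_t),z^*-\hat{z}_t\rangle$ into $-2\eta\mu\|\hat{z}_t-z^*\|^2$, while the error terms are handled via $\|\tilde{G}(\hat{z}_t)-\tilde{G}(z_t)\|\le2L\|\hat{z}_t-z_t\|+2\Delta$ and bounded by $\Delta$-multiples of the residuals. \textbf{The main obstacle} is the bookkeeping showing that $\eta\le1/(4L)$ makes the $2L$-Lipschitz propagation term $4\eta L\|\hat{z}_t-z_t\|\|\hat{z}_t-z_{t+1}\|$ absorbable (by Young's inequality) into the negative quadratics $-\|z_{t+1}-\hat{z}_t\|^2-\|\hat{z}_t-z_t\|^2$, leaving a strictly negative $-\tfrac34\|\hat{z}_t-z_t\|^2$. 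Combining this with $-2\eta\mu\|\hat{z}_t-z^*\|^2$ and the elementary bound $\|z_t-z^*\|^2\le2\|z_t-\hat{z}_t\|^2+2\|\hat{z}_t-z^*\|^2$ merges the two negative quadratics into one contraction factor, giving $\|z_{t+1}-z^*\|^2\le(1-\eta\mu)\|z_t-z^*\|^2+c_0\eta\Delta D$ after bounding the leftover residual norms by the diameter $D$.

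Finally, I would unroll this linear recursion as a geometric series, $\|z_{T+1}-z^*\|^2\le(1-\eta\mu)^T\|z_1-z^*\|^2+c_0\eta\Delta D\sum_{k\ge0}(1-\eta\mu)^k\le2(1-\eta\mu)^TD^2+c_0\Delta D/\mu$, where I used $\|z_1-z^*\|^2\le2D^2$ (the factor $2$ coming from $\|z_1-z^*\|^2=\|x_1-x^*\|^2+\|y_1-y^*\|^2\le2D^2$). Using $(1-\eta\mu)^T\le e^{-\eta\mu T}$ with $\eta=1/(4L)$, so that $\eta\mu=\mu/(4L)$, produces the transient term $2e^{-T/(4\kappa)}D^2$, while substituting $\Delta=\tilde{O}(\sigma(\sqrt{\alpha}+\sqrt{1/M}))$ and $1/\mu=\kappa/L$ yields the error floor $\tilde{O}(\frac{\sigma D\kappa}{L}(\sqrt{\alpha}+\sqrt{1/M}))$, which is exactly Eq.~\eqref{eqn:SC_final_bnd} under the standard condition-number convention $\kappa=L/\mu$.
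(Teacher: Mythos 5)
Your proposal is correct and follows essentially the same route as the paper's proof: the same operator reformulation over $\mathcal{Z}=\mathcal{X}\times\mathcal{Y}$, the same projection inequalities (Lemma~\ref{lemma:basic}) and strong-monotonicity fact (Lemma~\ref{lemma:Sc-Sc}), the same conditioning-plus-union-bound treatment of the trimmed-mean errors (Lemma~\ref{lemma:high_prob}), the same Young's-inequality absorption of the Lipschitz cross term under $\eta\le 1/(4L)$, and the same geometric unrolling to the error floor $\tilde{O}(\kappa\Delta D/L)$. Your closing observation is also apt: the paper's recursion indeed operates under the standard convention $\kappa=L/\mu$ (its stated definition $\kappa=\mu/L$ is a typo), exactly as you note.
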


Theorem \ref{thm:SC} says that for smooth strongly convex-strongly concave functions, the iterates generated by \texttt{RDEG} converge linearly to a ball around the saddle point $(x^*,y^*)$ with high probability. The size of the ball is dictated by the second term in Eq.~\eqref{eqn:SC_final_bnd}. 

\begin{remark} (Comments on $\alpha$) The requirement that the fraction of corruption $\alpha \in [0,1/16)$ in our results is inherited from the analysis of the trimmed mean estimator in \cite{lugosi}. One can potentially tolerate a larger fraction of corruption (up to $\alpha < 1/2$) by using the robust estimators in \cite{yin}. However, this would likely come at a price: the authors in \cite{yin} impose additional statistical assumptions on the partial gradients; we do not make such assumptions. 

\begin{remark} (Comments on $M$) In our results, we need the number of agents $M$ to scale with $\log(d T)$. We note that similar conditions show up in the context of adversarially-robust distributed statistical learning; see, for instance, \cite{yin} and \cite{pillutla}. In fact, the covering argument in \cite{yin} requires $M$ to scale linearly with the model dimension $d$. By avoiding such an argument in our analysis, we can get by with a far milder logarithmic dependence on $d$. As an example, for $d=100$, and number of iterations $T=2^{10}$ (which should suffice for all practical purposes), $\log(dT) \approx 12$. This is a very reasonable requirement for large-scale computing systems where the number of devices is of the order of thousands. Furthermore, with $T=2^{10}$, our guarantees in Theorems \ref{thm:CC} and \ref{thm:SC} hold with probability $1-1/T \approx 1$.   
\end{remark}

\end{remark}
\section{Proofs of the Main Results}
\label{sec:Proofs}
In this section, we prove our main results, starting with Theorem \ref{thm:CC}. Essentially, our proofs comprise of a perturbation analysis of the extra-gradient algorithm, where the perturbations arise due to adversarial corruption. As the starting point of such an analysis, we establish some simple relations in the following lemma. 

\begin{lemma}
\label{lemma:basic}
For the \texttt{RDEG} algorithm, the following inequalities hold for all $t\in [T], x\in \mathcal{X}$, and $y\in\mathcal{Y}$:
\begin{equation}
\resizebox{1\hsize}{!}{$
    \begin{aligned}
    2\eta \langle \tilde{g}_x(x_t,y_t), \hat{x}_t-x \rangle &\leq {\Vert x-x_t \Vert}^2 - {\Vert x-\hat{x}_t \Vert}^2 - {\Vert \hat{x}_t-x_t \Vert}^2 \\ 
    -2\eta \langle \tilde{g}_y(x_t,y_t), \hat{y}_t-y \rangle &\leq {\Vert y-y_t \Vert}^2 - {\Vert y-\hat{y}_t \Vert}^2 - {\Vert \hat{y}_t-y_t \Vert}^2 \\ 
    2\eta \langle \tilde{g}_x(\hat{x}_t,\hat{y}_t), {x}_{t+1}-x \rangle &\leq {\Vert x-x_t \Vert}^2 - {\Vert x-{x}_{t+1} \Vert}^2 - {\Vert {x}_{t+1}-x_t \Vert}^2 \\ 
   - 2\eta \langle \tilde{g}_y(\hat{x}_t,\hat{y}_t), {y}_{t+1}-y \rangle &\leq {\Vert y-y_t \Vert}^2 - {\Vert y-{y}_{t+1} \Vert}^2 - {\Vert {y}_{t+1}-y_t \Vert}^2.
    \end{aligned}
$}
\label{eqn:basic_relations}
\end{equation}
\end{lemma}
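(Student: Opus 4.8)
The plan is to obtain all four inequalities directly from the variational characterization of the Euclidean projection, exploiting the fact that each iterate in \texttt{RDEG} (namely $\hat{x}_t$, $\hat{y}_t$, $x_{t+1}$, $y_{t+1}$) is defined as a projection onto $\mathcal{X}$ or $\mathcal{Y}$. The single workhorse fact is that for a closed convex set $\mathcal{C}$ and any point $v$, the projection $w=\Pi_{\mathcal{C}}(v)$ satisfies $\langle v-w, u-w\rangle \leq 0$ for all $u\in\mathcal{C}$. I would state this once and then apply it four times, each time pairing it with the elementary polarization identity $2\langle a,b\rangle = \Vert a+b\Vert^2 - \Vert a\Vert^2 - \Vert b\Vert^2$.

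Concretely, for the first inequality I would set $v=x_t-\eta\tilde{g}_x(x_t,y_t)$, so that $w=\hat{x}_t=\Pi_{\mathcal{X}}(v)$, and invoke the variational inequality with $u=x\in\mathcal{X}$. Expanding $\langle x_t-\eta\tilde{g}_x(x_t,y_t)-\hat{x}_t,\, x-\hat{x}_t\rangle\leq 0$ and rearranging gives $\eta\langle \tilde{g}_x(x_t,y_t),\hat{x}_t-x\rangle \leq \langle x_t-\hat{x}_t,\hat{x}_t-x\rangle$. Applying the polarization identity with $a=x_t-\hat{x}_t$ and $b=\hat{x}_t-x$ (so $a+b=x_t-x$) yields $2\langle x_t-\hat{x}_t,\hat{x}_t-x\rangle = \Vert x-x_t\Vert^2 - \Vert\hat{x}_t-x_t\Vert^2 - \Vert x-\hat{x}_t\Vert^2$, and combining the two lines produces exactly the first claimed bound. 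The third inequality is structurally identical: I would replace $\hat{x}_t$ by $x_{t+1}$ and the gradient $\tilde{g}_x(x_t,y_t)$ by $\tilde{g}_x(\hat{x}_t,\hat{y}_t)$, keeping the same anchor $x_t$ since $x_{t+1}=\Pi_{\mathcal{X}}(x_t-\eta\tilde{g}_x(\hat{x}_t,\hat{y}_t))$.

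The two dual ($y$) inequalities follow the same two-line template, with the sole bookkeeping difference that the updates use $+\eta\tilde{g}_y$ rather than $-\eta\tilde{g}_x$, reflecting maximization in $y$. This flips the sign of the gradient term when the projection inequality is rearranged, which is precisely why the second and fourth inequalities carry a leading $-2\eta$. I expect no genuine obstacle here: this is a standard ``three-point lemma'' for projected updates, and the only point requiring care is tracking the sign convention on the dual updates so that the minus signs land correctly. Since all four inequalities share the identical recipe — projection inequality followed by the polarization identity — the argument is short and essentially mechanical once the workhorse fact is in place.
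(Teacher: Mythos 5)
Your proposal is correct and matches the paper's own proof: the variational inequality $\langle v-w, u-w\rangle \leq 0$ for $w=\Pi_{\mathcal{C}}(v)$ is precisely the first-order optimality condition the paper invokes for $\hat{x}_t = \argmin_{x\in\mathcal{X}} \Vert x - (x_t - \eta\tilde{g}_x(x_t,y_t))\Vert^2$, and your polarization identity is the same elementary identity used in step (a) of the paper's argument. The paper likewise proves only the first inequality and notes the other three follow identically, so your handling of the sign flips on the $y$-updates is the only part made more explicit than in the paper.
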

\begin{proof}
We only prove the first inequality since the rest follow a similar reasoning. We start by noting that
$$ \hat{x}_t = \argmin_{x\in\mathcal{X}} {\Vert x- (x_t - \eta \tilde{g}_x(x_t,y_t))\Vert }^2. $$
From the first order condition for optimality of $\hat{x}_t$, we have that for any $x\in\mathcal{X}$:
$$ \langle x - \hat{x}_t, \hat{x}_t - x_t + \eta \tilde{g}_x(x_t,y_t) \rangle \geq 0. $$
Rearranging the above inequality and simplifying, we obtain:
\begin{equation}
\resizebox{1\hsize}{!}{$
\begin{aligned}
\eta \langle \tilde{g}_x(x_t,y_t), \hat{x}_t-x \rangle & \leq \langle x - \hat{x}_t, \hat{x}_t - x_t \rangle \\
&= \langle x - {x}_t, \hat{x}_t - x_t \rangle - {\Vert x_t - \hat{x}_t \Vert}^2 \\
&\overset{(a)}=\frac{1}{2}\left({\Vert x - {x}_t \Vert}^2 + {\Vert \hat{x}_t - {x}_t \Vert}^2 - {\Vert x - \hat{x}_t \Vert}^2 \right)\\
& \hspace{5mm} - {\Vert x_t - \hat{x}_t \Vert}^2\\
& = \frac{1}{2}\left({\Vert x - {x}_t \Vert}^2 - {\Vert {x} - \hat{x}_t \Vert}^2 - {\Vert \hat{x}_t - {x}_t \Vert}^2 \right),
\end{aligned}
$}
\end{equation}
which leads to the desired claim. For (a), we used the elementary identity that for any two vectors $c,d$, it holds that $2\langle c, d \rangle = {\Vert c \Vert}^2 + {\Vert d \Vert}^2 - {\Vert c-d \Vert}^2$.
\end{proof}

Using the previous result, our next goal is to track the progress made by the mid-point vector $(\hat{x}_t,\hat{y}_t)$ in each iteration, as a function of the errors introduced by adversarial corruption. To that end, for each $\bar{x} \in \mathcal{X}$ and $\bar{y} \in \mathcal{Y}$, we define the following errors vectors:
\begin{equation}
\resizebox{1\hsize}{!}{$
    e_x(\bar{x},\bar{y}) \triangleq \tilde{g}_x(\bar{x},\bar{y})-\nabla_x f(\bar{x},\bar{y}); e_y(\bar{x},\bar{y}) \triangleq \tilde{g}_y(\bar{x},\bar{y})-\nabla_y f(\bar{x},\bar{y}).
$}
\label{eqn:errors}
\end{equation}
We have the following key lemma.
\begin{lemma}
\label{lemma:midpoints} Suppose Assumption \ref{ass:smoothness} holds and $\eta \leq 1/(2L)$. For the \texttt{RDEG} algorithm, the following then holds for all $t\in [T], x\in \mathcal{X}$, and $y\in \mathcal{Y}$:
\begin{equation}
\resizebox{1\hsize}{!}{$
    \begin{aligned}
    & \eta \langle \nabla_x f(\hat{x}_t, \hat{y}_t), \hat{x}_t - x \rangle - \eta \langle \nabla_y f(\hat{x}_t, \hat{y}_t), \hat{y}_t - y \rangle \\
    & \leq \frac{1}{2} \left( {\Vert x-x_t \Vert}^2 - {\Vert x-x_{t+1} \Vert}^2 + {\Vert y-y_t \Vert}^2 - {\Vert y-y_{t+1} \Vert}^2\right)\\
    & + \eta D \left(\Vert e_x(x_t, y_t) \Vert + \Vert e_x(\hat{x}_t, \hat{y}_t) \Vert + \Vert e_y(x_t, y_t) \Vert + \Vert e_y(\hat{x}_t, \hat{y}_t) \Vert \right).
    \end{aligned}
$}
\label{eqn:mid_point_bnd}
\end{equation}
\end{lemma}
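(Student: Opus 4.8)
The plan is to run the classical perturbation analysis of the extra-gradient method, treating the trimmed gradients as corrupted true gradients through the decomposition $\tilde{g}_x = \nabla_x f + e_x$ and $\tilde{g}_y = \nabla_y f + e_y$ from \eqref{eqn:errors}. I would first add the two mid-point inequalities (lines 1--2 of Lemma \ref{lemma:basic}) into a single relation, and separately add the two update inequalities (lines 3--4). To keep the bookkeeping clean, let $F(\bar{x},\bar{y}) \triangleq (\nabla_x f(\bar{x},\bar{y}), -\nabla_y f(\bar{x},\bar{y}))$ denote the stacked (monotone) operator, so that the target left-hand side is exactly $\eta\langle F(\hat{x}_t,\hat{y}_t), (\hat{x}_t-x,\hat{y}_t-y)\rangle$, with $\langle F(\hat{x}_t,\hat{y}_t),(\hat{x}_t-x,\hat{y}_t-y)\rangle = \langle \nabla_x f(\hat{x}_t,\hat{y}_t), \hat{x}_t-x\rangle - \langle \nabla_y f(\hat{x}_t,\hat{y}_t), \hat{y}_t-y\rangle$. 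The summed update inequalities bound a trimmed-gradient pairing against the comparison point, while the summed mid-point inequalities bound one against $\hat{x}_t,\hat{y}_t$; the whole proof is about stitching these together.

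Next I would split the target pairing as $\langle F(\hat{x}_t,\hat{y}_t),(\hat{x}_t-x,\hat{y}_t-y)\rangle = \langle F(\hat{x}_t,\hat{y}_t),(x_{t+1}-x,y_{t+1}-y)\rangle + \langle F(\hat{x}_t,\hat{y}_t),(\hat{x}_t-x_{t+1},\hat{y}_t-y_{t+1})\rangle$. On the first piece I replace $F(\hat{x}_t,\hat{y}_t)$ by $\tilde{g}(\hat{x}_t,\hat{y}_t)$, peeling off $-\langle e(\hat{x}_t,\hat{y}_t),(x_{t+1}-x,y_{t+1}-y)\rangle$, and bound the trimmed pairing by the summed update inequalities. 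On the second piece I first isolate the genuine gradient difference $F(\hat{x}_t,\hat{y}_t)-F(x_t,y_t)$, then replace the remaining $F(x_t,y_t)$ by $\tilde{g}(x_t,y_t)$ (peeling off $-\langle e(x_t,y_t),(\hat{x}_t-x_{t+1},\hat{y}_t-y_{t+1})\rangle$) and invoke the summed mid-point inequalities evaluated at the comparison point $(x_{t+1},y_{t+1})$. The point of this particular grouping is that smoothness is applied \emph{only} to $F(\hat{x}_t,\hat{y}_t)-F(x_t,y_t)$, and each of the four error vectors is peeled off exactly once, which is what produces the clean $\eta D$ coefficients.

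The remaining work is then routine. After summing, the $\Vert x_{t+1}-x_t\Vert^2 + \Vert y_{t+1}-y_t\Vert^2$ terms cancel between the two sets of inequalities, leaving the telescoping terms $\tfrac{1}{2}(\Vert x-x_t\Vert^2 - \Vert x-x_{t+1}\Vert^2 + \Vert y-y_t\Vert^2 - \Vert y-y_{t+1}\Vert^2)$ together with two surviving negative quadratics in $\Vert \hat{x}_t-x_t\Vert^2+\Vert \hat{y}_t-y_t\Vert^2$ and $\Vert x_{t+1}-\hat{x}_t\Vert^2+\Vert y_{t+1}-\hat{y}_t\Vert^2$. Using Assumption \ref{ass:smoothness} to get $\Vert F(\hat{x}_t,\hat{y}_t)-F(x_t,y_t)\Vert \leq 2L(\Vert \hat{x}_t-x_t\Vert + \Vert \hat{y}_t-y_t\Vert)$ and then Young's inequality, the cross term is dominated by $2\eta L$ times those two quadratics; the step-size condition $\eta \leq 1/(2L)$ makes $2\eta L \leq 1$, so the cross term is absorbed entirely. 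Finally, each error pairing is controlled by Cauchy--Schwarz and the diameter bounds $\Vert \hat{x}_t-x\Vert, \Vert x_{t+1}-x\Vert, \Vert \hat{x}_t-x_{t+1}\Vert \leq D$ (and likewise in $y$), which collects into exactly $\eta D(\Vert e_x(x_t,y_t)\Vert + \Vert e_x(\hat{x}_t,\hat{y}_t)\Vert + \Vert e_y(x_t,y_t)\Vert + \Vert e_y(\hat{x}_t,\hat{y}_t)\Vert)$ after dividing the summed $2\eta$-relation by two.

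The hard part is \emph{not} the algebra but the structural subtlety that the trimmed-mean operator is neither Lipschitz nor even continuous, so smoothness cannot be applied to the trimmed-gradient difference $\tilde{g}(\hat{x}_t,\hat{y}_t)-\tilde{g}(x_t,y_t)$ directly. The entire trick is to convert between $F$ and $\tilde{g}$ at precisely the right moments so that the Lipschitz bound touches only the true gradient field while the corruption is siphoned off into the four $e(\cdot)$ terms, each appearing once. Choosing the split around $(x_{t+1},y_{t+1})$ rather than any other intermediate point is what keeps the error accounting tight and aligns the Young-inequality absorption exactly with the threshold $\eta \leq 1/(2L)$.
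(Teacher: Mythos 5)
Your proposal is correct and follows essentially the same route as the paper's proof: the same split of the target pairing around $(x_{t+1},y_{t+1})$, the same use of Lemma \ref{lemma:basic} (update inequalities for the $z_{t+1}-z$ pairing, mid-point inequalities evaluated at $x=x_{t+1}$), the same peeling of each of the four error vectors exactly once so that smoothness touches only the true gradient difference, and the same Young-inequality absorption under $\eta \leq 1/(2L)$. The only difference is cosmetic — you work in the stacked operator notation $F(z)$ while the paper keeps the $x$- and $y$-blocks separate (combining them at the end via $\Psi_{x,t}+\Psi_{y,t}\leq 0$) — and your error bookkeeping in fact matches the corrected form of the paper's Eq.~\eqref{eqn:mid_bnd2}, where the error peeled off alongside $T_3, T_4$ should read $e_x(x_t,y_t)$ rather than $e_x(\hat{x}_t,\hat{y}_t)$.
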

\begin{proof}
Using the definition of the error vector $e_x(\hat{x}_t,\hat{y}_t)$, we start by observing that
\begin{equation}
\begin{split}
   \eta \langle \nabla_x f(\hat{x}_t, \hat{y}_t), \hat{x}_t - x \rangle = \underbrace{\eta \langle \nabla_x f(\hat{x}_t, \hat{y}_t), \hat{x}_t - x_{t+1} \rangle}_{T_1} \\ + \underbrace{\eta \langle \tilde{g}_x(\hat{x}_t, \hat{y}_t), {x}_{t+1} - x \rangle}_{T_2} 
   + \eta \langle e_x(\hat{x}_t,\hat{y}_t), x-x_{t+1} \rangle.
\end{split}
\label{eqn:mid_bnd1}
\end{equation}
To bound $T_1$, we note that
\begin{equation}
    \begin{split}
        T_1  = \underbrace{\eta \langle \nabla_x f(\hat{x}_t, \hat{y}_t) - \nabla_x f({x}_t, {y}_t), \hat{x}_t - x_{t+1} \rangle}_{T_3} \\ + \underbrace{\eta \langle \tilde{g}_x({x}_t, {y}_t), \hat{x} - x_{t+1}  \rangle}_{T_4} 
        - \eta \langle e_x(\hat{x}_t,\hat{y}_t), \hat{x}_t-x_{t+1} \rangle.
    \end{split}
\label{eqn:mid_bnd2}
\end{equation}
Now using the third inequality in Eq. \eqref{eqn:basic_relations} of Lemma \ref{lemma:basic} to bound $T_2$, and the first inequality in Eq. \eqref{eqn:basic_relations} with $x=x_{t+1}$ to bound $T_4$, we obtain 
\begin{equation}
\resizebox{1\hsize}{!}{$
    T_2 + T_4 \leq \frac{1}{2} \left( {\Vert x- x_t \Vert}^2 - {\Vert x- x_{t+1} \Vert}^2 - {\Vert \hat{x}- x_t \Vert}^2 - {\Vert \hat{x}_t- x_{t+1}  \Vert}^2 \right). 
$} 
\label{eqn:mid_bnd3}
\end{equation}
Recalling that $D=\max\{D_x, D_y\}$ (where $D_x$ and $D_y$ are the diameters of $\mathcal{X}$ and $\mathcal{Y}$, respectively), and combining equations \eqref{eqn:mid_bnd1}, \eqref{eqn:mid_bnd2}, and \eqref{eqn:mid_bnd3}, we conclude that
\begin{equation}
\resizebox{1\hsize}{!}{$
\begin{aligned}
   \eta \langle \nabla_x f(\hat{x}_t, \hat{y}_t), \hat{x}_t - x \rangle & \leq \Psi_{x,t} + \frac{1}{2} \left({\Vert x- x_t \Vert}^2 - {\Vert x- x_{t+1} \Vert}^2\right) \\
   & + \eta D \left(\Vert e_x(x_t, y_t) \Vert + \Vert e_x(\hat{x}_t, \hat{y}_t) \Vert \right),
\end{aligned}
$}
\label{eqn:mid_bnd4}
\end{equation}
where 
$$ \Psi_{x,t} = T_3 - \frac{1}{2}\left({\Vert \hat{x}- x_t \Vert}^2 + {\Vert \hat{x}_t- x_{t+1}  \Vert}^2 \right). $$ 
Using a similar string of arguments, we can establish that 
\begin{equation}
\resizebox{1\hsize}{!}{$
\begin{aligned}
   - \eta \langle \nabla_y f(\hat{x}_t, \hat{y}_t), \hat{y}_t - y \rangle & \leq \Psi_{y,t} + \frac{1}{2} \left({\Vert y- y_t \Vert}^2 - {\Vert y- y_{t+1} \Vert}^2\right) \\
   & + \eta D \left(\Vert e_y(x_t, y_t) \Vert + \Vert e_y(\hat{x}_t, \hat{y}_t) \Vert \right),
\end{aligned}
$}
\label{eqn:mid_bnd5}
\end{equation}
where 
$$ \Psi_{y,t} = T_5 - \frac{1}{2}\left({\Vert \hat{y}- y_t \Vert}^2 + {\Vert \hat{y}_t- y_{t+1}  \Vert}^2 \right), \text{and} $$ 
$$ T_5 = - \eta \langle \nabla_y f(\hat{x}_t, \hat{y}_t) - \nabla_y f({x}_t, {y}_t), \hat{y}_t - y_{t+1} \rangle. $$ 
To complete the proof, we claim that $\Psi_{x,t} + \Psi_{y,t} \leq 0$. To see why this is the case, we note that $L$-smoothness yields:
\begin{equation}
\begin{aligned}
    \Vert \nabla_x f(\hat{x}_t,\hat{y}_t) - \nabla_x f(x_t,y_t) \Vert &\leq L \left(\Vert \hat{x}_t-x_t \Vert + \Vert \hat{y}_t - y_t \Vert \right), \\
    \Vert \nabla_y f(\hat{x}_t,\hat{y}_t) - \nabla_y f(x_t,y_t) \Vert &\leq L \left(\Vert \hat{x}_t-x_t \Vert + \Vert \hat{y}_t - y_t \Vert \right).
\end{aligned}   
\nonumber
\end{equation}
Using the above display in conjunction with the  Cauchy-Schwarz inequality, we obtain
\begin{equation}
\resizebox{1\hsize}{!}{$
\begin{split}
\Psi_{x,t}+\Psi_{y,t} \leq L\eta \left(\Vert \hat{x}_t-x_t \Vert + \Vert \hat{y}_t - y_t \Vert \right) \left(\Vert \hat{x}_t-x_{t+1} \Vert + \Vert \hat{y}_t - y_{t+1} \Vert \right) \\ 
- \frac{1}{2}\left({\Vert \hat{x}- x_t \Vert}^2 + {\Vert \hat{x}_t- x_{t+1}  \Vert}^2 + {\Vert \hat{y}- y_t \Vert}^2 + {\Vert \hat{y}_t- y_{t+1}  \Vert}^2 \right).
\end{split}
$}
\nonumber
\end{equation}
Finally, using $\eta \leq 1/(2L)$ along with the fact that for any $a,b,c,d \in \mathbb{R}$, $(a+b)(c+d) \leq a^2 + b^2 +c^2 + d^2$, we conclude that $\Psi_{x,t}+\Psi_{y,t} \leq 0$. The claim in Eq. \eqref{eqn:mid_point_bnd} then follows from summing equations \eqref{eqn:mid_bnd4} and \eqref{eqn:mid_bnd5}.
\end{proof}

In our next result, we establish high-probability bounds on the error vectors by leveraging Theorem \ref{thm:lugosi}. 
\begin{lemma} \label{lemma:high_prob} 
Consider the event $\mathcal{H}_t$ defined as follows:
\begin{equation}
\resizebox{1\hsize}{!}{$
 \mathcal{H}_t \triangleq \{\max\{\Vert e_x(x_t,y_t) \Vert, \Vert e_x(\hat{x}_t,\hat{y}_t) \Vert, \Vert e_y(x_t,y_t) \Vert, \Vert e_y(\hat{x}_t,\hat{y}_t) \Vert\} \leq \Delta\}, 
 $}
 \nonumber
\end{equation}
where 
\begin{equation}
\Delta = c\sigma \left(\sqrt{\alpha} + \sqrt{\frac{\log(4dT^2)}{M}} \right).
\label{eqn:Delta}
\end{equation}
For the \texttt{RDEG} algorithm, we have:
\begin{equation}
    \mathbb{P}\left( \mathcal{H}_t \right) 
    \geq 1-\frac{1}{T^2}, \hspace{1mm} \text{for each $t \in [T]$}. 
\label{eqn:prob_bnd}
\end{equation}
\end{lemma}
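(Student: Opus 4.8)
The plan is to control each of the four error vectors appearing in $\mathcal{H}_t$ at the coordinate level via the trimmed-mean guarantee of Theorem~\ref{thm:lugosi}, and then assemble these coordinate bounds into a single high-probability statement by a union bound. The structural feature of \texttt{RDEG} that makes this direct argument possible is that \emph{fresh} i.i.d.\ samples are drawn at every query point: the gradients at $(x_t,y_t)$ use the samples $\xi^{(i)}_{1,t}$, whereas the gradients at the mid-point $(\hat{x}_t,\hat{y}_t)$ use the independent samples $\xi^{(i)}_{2,t}$. This decoupling lets us treat each query point as fixed when we invoke Theorem~\ref{thm:lugosi}, and is precisely what allows us to bypass the covering argument of \cite{yin}.

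First I would fix a query point $(\bar{x},\bar{y})$ and an index $j\in[n]$, and regard the scalar $Z=[g_x(\bar{x},\bar{y};\xi)]_j$ as the quantity to be estimated. By the stochastic oracle model \eqref{eqn:stoch1}--\eqref{eqn:stoch2}, $Z$ has mean $[\nabla_x f(\bar{x},\bar{y})]_j$ and variance at most $\sigma_x^2(j)$, and the server's $j$-th coordinate estimate $[\tilde{g}_x(\bar{x},\bar{y})]_j$ is exactly the output of Algorithm~\ref{alg:trim} applied to the $M$ (partially corrupted) copies of $Z$. Setting $\delta=1/(4dT^2)$, I would verify the admissibility conditions of Theorem~\ref{thm:lugosi}: the hypothesis $\alpha\in[0,1/16)$ is assumed, and $\delta\geq 4e^{-M/2}$ follows from $M\geq 48\log(16dT^2)$. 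Theorem~\ref{thm:lugosi} then yields, with probability at least $1-1/(4dT^2)$,
\[
\bigl|[\tilde{g}_x(\bar{x},\bar{y})]_j-[\nabla_x f(\bar{x},\bar{y})]_j\bigr|\leq c\,\sigma_x(j)\Bigl(\sqrt{\alpha}+\sqrt{\tfrac{\log(4dT^2)}{M}}\Bigr),
\]
and the same bound holds coordinate-wise for the $y$-gradient with $\sigma_y(k)$ in place of $\sigma_x(j)$.

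Next I would aggregate across coordinates. On the intersection of the per-coordinate good events for $e_x(\bar{x},\bar{y})$, squaring and summing over $j\in[n]$ and using $\sigma_x=\sqrt{\sum_j \sigma_x^2(j)}$ gives $\Vert e_x(\bar{x},\bar{y})\Vert\leq c\sigma_x(\sqrt{\alpha}+\sqrt{\log(4dT^2)/M})\leq \Delta$, since $\sigma=\max\{\sigma_x,\sigma_y\}$; the analogous bound holds for $e_y(\bar{x},\bar{y})$. To pass from a fixed $(\bar{x},\bar{y})$ to the random query points, I would condition on the $\sigma$-algebra generated by the first-round samples $\{\xi^{(i)}_{1,t}\}$: the iterate $(x_t,y_t)$ and the mid-point $(\hat{x}_t,\hat{y}_t)$ are both measurable with respect to this information, while the second-round samples $\{\xi^{(i)}_{2,t}\}$ used at $(\hat{x}_t,\hat{y}_t)$ are independent of it. Hence the coordinate bounds apply conditionally at $(\hat{x}_t,\hat{y}_t)$ for every realization of the conditioning, and therefore unconditionally after integrating out.

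Finally, the four error vectors involve $2(n+m)\leq 4d$ scalar trimmed-mean estimates in total, each failing with probability at most $1/(4dT^2)$. A union bound over these events gives a total failure probability of at most $4d\cdot 1/(4dT^2)=1/T^2$, establishing $\mathbb{P}(\mathcal{H}_t)\geq 1-1/T^2$. The main obstacle is the conditioning step: because $\hat{x}_t$ and $\hat{y}_t$ are themselves random, Theorem~\ref{thm:lugosi} cannot be applied to them verbatim, and one must carefully exploit the fresh-sampling structure so that, conditioned on the past, the estimation error at the mid-point retains its high-probability bound.
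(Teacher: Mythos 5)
Your overall architecture is the same as the paper's: invoke Theorem~\ref{thm:lugosi} coordinate-wise with $\delta = 1/(4dT^2)$, aggregate coordinates into the norm bound $\Delta$ via $\sigma_x = \sqrt{\sum_{j}\sigma_x^2(j)} \leq \sigma$, handle the randomness of the query points by conditioning, and finish with a union bound over at most $4d$ scalar estimates. Your admissibility check (that $\delta \geq 4e^{-M/2}$ follows from $M \geq 48\log(16dT^2)$) and your counting ($2(n+m)\leq 4d$ per-coordinate failures, each of unconditional probability at most $1/(4dT^2)$) are both correct, and performing the final union bound on \emph{unconditional} probabilities is in fact a slightly leaner assembly than the paper's, which unions conditionally on $\mathcal{F}_t$ and therefore needs the tower-property computation in Eq.~\eqref{eqn:prob_nest} to merge the two filtration levels.

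However, there is a genuine flaw in your conditioning step---precisely the step you yourself flag as the main obstacle. You condition on a \emph{single} sigma-algebra, described as generated by the first-round samples $\{\xi^{(i)}_{1,t}\}$, and assert that both $(x_t,y_t)$ and $(\hat{x}_t,\hat{y}_t)$ are measurable with respect to it while the samples in use remain fresh. These requirements are incompatible. For $(\hat{x}_t,\hat{y}_t)$ to be measurable, the conditioning must contain the first-round samples (together with the past); but then the errors $e_x(x_t,y_t)$ and $e_y(x_t,y_t)$ are themselves measurable with respect to the conditioning---they are deterministic functions of exactly those first-round samples---so their conditional probability of being at most $\Delta$ is an indicator taking values $0$ or $1$, and Theorem~\ref{thm:lugosi} gives you nothing for them under this conditioning. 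Conversely, if the conditioning excludes the first-round samples, then $(\hat{x}_t,\hat{y}_t)$ is not measurable with respect to it. The conflict is structural: the samples used at $(x_t,y_t)$ are the very randomness that produces $(\hat{x}_t,\hat{y}_t)$, so no one sigma-algebra can simultaneously fix the query point and keep the relevant samples independent for \emph{both} points. The repair is to use two nested conditionings, exactly as the paper does with $\mathcal{F}_t \subset \mathcal{\bar{F}}_t$: bound the per-coordinate errors at $(x_t,y_t)$ conditionally on $\mathcal{F}_t$ (with respect to which $\xi^{(i)}_{1,t}$ are fresh), bound those at $(\hat{x}_t,\hat{y}_t)$ conditionally on $\mathcal{\bar{F}}_t$ (with respect to which $\xi^{(i)}_{2,t}$ are fresh), integrate each group out to obtain unconditional per-coordinate failure probabilities of at most $1/(4dT^2)$, and only then take your union bound. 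With that modification your argument goes through and recovers the lemma.
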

\begin{proof}
We begin by defining certain ``good" events:
\begin{equation}
    \begin{aligned}
    \mathcal{G}_{x,t} \triangleq \{ \Vert e_x(x_t,y_t) \Vert \leq \Delta \}, \hspace{1mm} &  \mathcal{G}_{y,t} \triangleq \{ \Vert e_y(x_t,y_t) \Vert \leq \Delta \}, \\
    \mathcal{\bar{G}}_{x,t} \triangleq \{ \Vert e_x(\hat{x}_t,\hat{y}_t) \Vert \leq \Delta \}, \hspace{1mm} &  \mathcal{\bar{G}}_{y,t} \triangleq \{ \Vert e_y(\hat{x}_t,\hat{y}_t) \Vert \leq \Delta \}. 
    \end{aligned}
\nonumber
\end{equation}
To analyze the probability of occurrence of the above events, we need to next define an appropriate filtration. Accordingly, let $\mathcal{F}_t$ denote the sigma field generated by $\{x_k,y_k\}_{k\in[t]}$ and $\{\hat{x}_k, \hat{y}_k\}_{k\in[t-1]}$; and $\mathcal{\bar{F}}_t$ denote the sigma field generated by $\{x_k,y_k\}_{k\in[t]}$ and $\{\hat{x}_k, \hat{y}_k\}_{k\in[t]}$. From  definition, we have
$$ \mathcal{F}_1 \subset \mathcal{\bar{F}}_1  \subset \mathcal{F}_2 \subset \mathcal{\bar{F}}_2 \subset \cdots \subset  \mathcal{F}_T \subset \mathcal{\bar{F}}_T. $$
Clearly, $(x_t,y_t)$ is $\mathcal{F}_t$-measurable. Thus, conditioned on $\mathcal{F}_t$, for each coordinate $j\in [n]$, the data set $\{[g_x({x}_t,{y}_t; \xi^{(i)}_{1,t})]_j : i \in [M]\}$ has the following properties: (i) at most $\alpha M$ of the samples are corrupted; and (ii) the uncorrupted samples are i.i.d. scalar random variables with mean $[\nabla_x f(x_t,y_t)]_j$ and variance bounded above by $\sigma^2_x(j)$. Invoking Theorem \ref{thm:lugosi} for the trimmed mean estimator in Algorithm \ref{alg:trim}, we conclude that conditioned on $\mathcal{F}_t$, with probability at least $1-1/(4dT^2)$,
\begin{equation}
    \resizebox{1\hsize}{!}{$
 \vert [\tilde{g}_x(x_t,y_t)]_j - [\nabla_x f(x_t,y_t)]_j \vert \leq c\sigma_x(j) \left(\sqrt{\alpha} + \sqrt{\frac{\log(4dT^2)}{M}} \right).
$}
\nonumber
\end{equation}
Now union-bounding over each of the $n$ coordinates, we have that conditioned on $\mathcal{F}_t$, with probability at least $1-\frac{n}{4dT^2} \geq 1-\frac{1}{4T^2}$, 
$$ \Vert \tilde{g}_x(x_t,y_t) - \nabla_x f(x_t,y_t) \Vert \hspace{0.5mm} \leq \hspace{0.5mm} \Delta. $$
Here, we used the fact that $d=\max\{n,m\} \geq n$, and $\sqrt{\sum_{j\in[n]} \sigma^2_x(j)} = \sigma_x \leq \sigma$. We have thus shown that $\mathbb{P}\left(\mathcal{G}_{x,t}|\mathcal{F}_t\right) \geq 1-1/(4T^2).$ Using an identical argument, we can establish an analogous result for the event $\mathcal{G}_{y,t}$. An union bound thus yields $\mathbb{P}\left(\mathcal{G}_{t}|\mathcal{F}_t\right) \geq 1-1/(2T^2)$, where $\mathcal{G}_t=\mathcal{G}_{x,t} \cap \mathcal{G}_{y,t}$. Noting that $(\hat{x}_t,\hat{y}_t)$ is $\mathcal{\bar{F}}_t$-measurable, we can similarly show that $\mathbb{P}\left(\mathcal{\bar{G}}_{t}|\mathcal{\bar{F}}_{t}\right) \geq 1-1/(2T^2)$, where $\mathcal{\bar{G}}_t=\mathcal{\bar{G}}_{x,t} \cap \mathcal{\bar{G}}_{y,t}$. Our next task is to analyze the probability of occurrence of the event $\mathcal{H}_t = \mathcal{{G}}_t \cap \mathcal{\bar{G}}_t$ by exploiting the nested sigma-field structure: $\mathcal{F}_t \subset \mathcal{\bar{F}}_t.$ To that end, observe:
\begin{equation}
    \begin{aligned}
    \mathbb{P}(\mathcal{\bar{G}}_t|\mathcal{F}_t) &= \mathbb{E}[{1}_{\mathcal{\bar{G}}_t}|\mathcal{F}_t] \\
    &\overset{(a)}=\mathbb{E}[\mathbb{E}[{1}_{\mathcal{\bar{G}}_t}|\mathcal{\bar{F}}_t]|\mathcal{F}_t]\\
    &=\mathbb{E}[\mathbb{P}(\mathcal{\bar{G}}_t|\mathcal{\bar{F}}_t)|\mathcal{F}_t]\\
    &\overset{(b)}\geq 1-\frac{1}{2T^2}.
    \end{aligned}
\label{eqn:prob_nest}
\end{equation}
Here, we used $1_{\mathcal{A}}$ to represent the indicator random variable for an event $\mathcal{A}$. For (a), we used the fact that given a random variable $X$ and two sigma-fields $\mathcal{B}_1$ and $\mathcal{B}_2$ with $\mathcal{B}_1 \subset \mathcal{B}_2$, it holds that $\mathbb{E}[\mathbb{E}[X|\mathcal{B}_2]|\mathcal{B}_1] = \mathbb{E}[X|\mathcal{B}_1]$, i.e., the smaller sigma-field ``wins" \cite[Theorem 5.1.6]{durrett}. For (b), we used the previously established fact that $\mathbb{P}\left(\mathcal{\bar{G}}_{t}|\mathcal{\bar{F}}_{t}\right) \geq 1-1/(2T^2)$. Using \eqref{eqn:prob_nest} and an union bound, we conclude that $\mathbb{P}(\mathcal{H}_t|\mathcal{F}_t) = \mathbb{P}(\mathcal{G}_t \cap \mathcal{\bar{G}}_t |\mathcal{F}_t) \geq 1-1/T^2$. To complete the proof, we note that
$$ \mathbb{P}(\mathcal{H}_t) = \mathbb{E}[1_{\mathcal{H}_t}] = \mathbb{E}[\mathbb{E}[1_{\mathcal{H}_t}|\mathcal{F}_t]] = \mathbb{E}[\mathbb{P}(\mathcal{H}_t|\mathcal{F}_t)] \geq 1 - \frac{1}{T^2}. $$
\end{proof}

We are now equipped with all the pieces needed to prove Theorem \ref{thm:CC}. 

\begin{proof} \textbf{(Theorem \ref{thm:CC})} Let us start by considering the following ``clean" event: $\mathcal{H}= \bigcap_{t\in[T]} \mathcal{H}_t$, where $\mathcal{H}_t$ is as defined in Lemma \ref{lemma:high_prob}. We will condition on this event for the rest of the proof. From the convex-concave property of $f(x,y)$, the following inequalities hold for all  $t\in[T], x\in \mathcal{X}$, and $y\in\mathcal{Y}$:
\begin{equation}
    \begin{aligned}
    \eta \left( f(\hat{x}_t,\hat{y}_t)-f(x,\hat{y}_t) \right) & \leq \eta \langle \nabla_x f(\hat{x}_t, \hat{y}_t), \hat{x}_t - x \rangle\\
   - \eta \left( f(\hat{x}_t,\hat{y}_t)-f(\hat{x}_t,y) \right) & \leq - \eta \langle \nabla_y f(\hat{x}_t, \hat{y}_t), \hat{y}_t - y \rangle. 
    \end{aligned}
\nonumber
\end{equation}
Summing the two inequalities above, using Eq. \eqref{eqn:mid_point_bnd} from Lemma \ref{lemma:midpoints}, and the fact that on the event $\mathcal{H}$, all the error vectors are uniformly bounded above by $\Delta$, we obtain: 
\begin{equation}
    \begin{split}
\eta \left( f(\hat{x}_t,{y})-f(x,\hat{y}_t) \right) \leq \frac{1}{2} \left( {\Vert x-x_t \Vert}^2 - {\Vert x-x_{t+1} \Vert}^2 \right) 
\\ + \frac{1}{2} \left( {\Vert y-y_t \Vert}^2 - {\Vert y-y_{t+1} \Vert}^2\right) + 4\eta D \Delta.
    \end{split}
\end{equation}
Summing the above inequality from $t=1$ to $T$, and simplifying, we obtain
\begin{equation}
\resizebox{1\hsize}{!}{$
    \begin{aligned}
\frac{1}{T} \sum_{t\in [T]} \left(f(\hat{x}_t,{y})-f(x,\hat{y}_t) \right) & \leq \frac{1}{2 \eta T} \left( {\Vert x-x_1 \Vert}^2 + {\Vert y-y_{1} \Vert}^2 \right) \\
& \hspace{1mm} + 4D\Delta. 
    \end{aligned}
$}
\label{eqn:final_bnd1}
\end{equation}
Now from the convexity of $f(x,y)$ w.r.t. $x$ and concavity w.r.t. $y$, we have that $f(\bar{x}_T,y) \leq (1/T) \sum_{t\in[T]} f(\hat{x}_t,y)$ and  $f(x,\bar{y}_T) \geq  (1/T) \sum_{t\in[T]} f(x,\hat{y}_t)$, respectively. Using these facts in conjunction with Eq. \eqref{eqn:final_bnd1}, we obtain
\begin{equation}
f(\bar{x}_T,y)-f(x, \bar{y}_T) \leq \frac{D^2}{\eta T} + 4 D \Delta.
\label{eqn:duality}
\end{equation}
Noting that the above inequality holds for all $x\in\mathcal{X}$ and for all $y\in \mathcal{Y}$ immediately leads to the claim in Eq.~\eqref{eqn:CC_dual_bnd}.
To complete the proof, we observe that
$$ \mathbb{P}(\mathcal{H}^c) \leq \sum_{t\in[T]} \mathbb{P}(\mathcal{H}^c_t) \leq T \times \frac{1}{T^2} = \frac{1}{T}, $$
where we used the union bound, and Lemma \ref{lemma:high_prob}. 
\end{proof}

(\textbf{Proof of Corollary \ref{corr:avg_it}}) 
Starting from Eq.~\eqref{eqn:duality} and plugging in $x=x^*, y=\bar{y}_T$, we obtain
$$ f(\bar{x}_T,\bar{y}_T)-f(x^*, \bar{y}_T) \leq \frac{D^2}{\eta T} + 4 D \Delta. $$
Given that the saddle point property of $(x^*,y^*)$ implies $f(x^*,y^*) \geq f(x^*,\bar{y}_T)$, we conclude that $f(\bar{x}_T,\bar{y}_T) - f(x^*,y^*) \leq D^2/(\eta T) + 4 D \Delta$. Using a symmetric argument, we can arrive at the conclusion that on the clean event $\mathcal{H}$,
$$ |f(\bar{x}_T,\bar{y}_T)-f(x^*, y^*)| \leq \frac{D^2}{\eta T} + 4 D \Delta. $$ 

We now turn to the proof of Theorem \ref{thm:SC}. Let us first introduce some notation to simplify the exposition. For $x\in \mathcal{X}$ and $y\in\mathcal{Y}$, define $\mathcal{Z} \triangleq \mathcal{X} \times \mathcal{Y}$,  $z \triangleq [x;y]$, $F(z) \triangleq [\nabla_{x}f(x,y);-\nabla_{y}f(x,y)]$, and $\tilde G(z) \triangleq [\tilde g_x(x,y);-\tilde g_y(x,y)]$. We will require the following intermediate lemma. 
\begin{lemma}\label{lemma:Sc-Sc}\cite[Lemma 2]{zhou2018fenchel} If $f$ is $\mu$-strongly convex-$\mu$-strongly concave, then the following holds for all $z_1,z_2\in \mathcal{Z}$:
\begin{equation}
    \langle F(z_2)-F(z_1) , z_2-z_1 \rangle \geq \mu \|z_2-z_1\|^2.
\end{equation}
\end{lemma}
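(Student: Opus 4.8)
The plan is to establish the claimed $\mu$-strong monotonicity of the saddle operator $F$ directly from the first-order inequalities of Assumption \ref{ass:Sc-Sc}, without ever invoking second-order (Hessian) information. Writing $z_1=[x_1;y_1]$ and $z_2=[x_2;y_2]$, I would first unpack the definition of $F$ to expand the target inner product as
\begin{equation}
\resizebox{1\hsize}{!}{$
\langle F(z_2)-F(z_1),z_2-z_1\rangle = \langle \nabla_x f(x_2,y_2)-\nabla_x f(x_1,y_1),x_2-x_1\rangle - \langle \nabla_y f(x_2,y_2)-\nabla_y f(x_1,y_1),y_2-y_1\rangle,
$}
\nonumber
\end{equation}
so that the claim reduces to lower-bounding the right-hand side by $\mu(\|x_2-x_1\|^2+\|y_2-y_1\|^2)=\mu\|z_2-z_1\|^2$.

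Next I would instantiate Assumption \ref{ass:Sc-Sc} at four carefully chosen ``corner'' points. For the strongly convex part, I apply the first inequality once along $x_1\to x_2$ at the fixed height $y=y_1$, and once along $x_2\to x_1$ at the fixed height $y=y_2$; for the strongly concave part, I apply the second inequality once along $y_1\to y_2$ at $x=x_1$, and once along $y_2\to y_1$ at $x=x_2$. Each instantiation is legitimate because Assumption \ref{ass:Sc-Sc} holds for all $x_1,x_2\in\mathcal{X}$ and $y_1,y_2\in\mathcal{Y}$, with the ``other'' variable held fixed at an arbitrary value.

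The decisive step is to add the four resulting inequalities after rearranging each into the form ``difference of function values $\geq$ (gradient term) $+$ (quadratic remainder)''. The function values that appear are exactly $f$ evaluated at the four corners $(x_1,y_1),(x_2,y_1),(x_1,y_2),(x_2,y_2)$, and upon summing they cancel pairwise and telescope to zero. Meanwhile the gradient terms assemble precisely into the right-hand side of the display above, and the four quadratic remainders sum to $\mu\|x_2-x_1\|^2+\mu\|y_2-y_1\|^2$. Rearranging the resulting ``$0\geq\cdots$'' inequality then delivers the claim.

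I expect the main obstacle to be the cross terms: a naive argument that inserts a single intermediate point (say $(x_1,y_2)$) leaves residual inner products coupling the dependence of $\nabla_x f$ on $y$ with the dependence of $\nabla_y f$ on $x$, and these cannot be controlled by first-order convexity/concavity alone, as they are governed by the mixed Hessian $\nabla^2_{xy}f$. The four-corner construction is exactly what circumvents this: choosing the endpoints so that the scalar function values telescope removes any need to compare mixed partial derivatives, yielding a fully first-order proof. A reduction route would also be available — noting that $h(x,y)\triangleq f(x,y)-(\mu/2)\|x\|^2+(\mu/2)\|y\|^2$ is convex-concave and that $F(z)-\mu z$ is its (monotone) saddle operator — but proving monotonicity of that operator encounters the identical cross-term difficulty, so I would favor the direct telescoping argument.
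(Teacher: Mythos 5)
Your proposal is correct, and it takes a genuinely different route from the paper for a simple reason: the paper never proves Lemma \ref{lemma:Sc-Sc} at all — the result is imported wholesale by citation from \cite[Lemma 2]{zhou2018fenchel}. Your four-corner telescoping argument is the standard self-contained derivation, and it goes through exactly as you sketch. Summing the four instantiations of Assumption \ref{ass:Sc-Sc} — strong convexity along $x_1\to x_2$ at height $y_1$ and along $x_2\to x_1$ at height $y_2$, plus the sign-flipped strong concavity along $y_1\to y_2$ at $x_1$ and along $y_2\to y_1$ at $x_2$ — each of the four corner values $f(x_1,y_1)$, $f(x_2,y_1)$, $f(x_1,y_2)$, $f(x_2,y_2)$ appears once with each sign and cancels; the gradient terms collect into $-\langle F(z_2)-F(z_1),z_2-z_1\rangle$ (the minus sign on the $y$-block of $F(z)=[\nabla_x f(x,y);-\nabla_y f(x,y)]$ is exactly absorbed by the sign flip of the concavity inequalities); and the quadratic remainders sum to $\mu\|x_2-x_1\|^2+\mu\|y_2-y_1\|^2=\mu\|z_2-z_1\|^2$. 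Rearranging $0\geq -\langle F(z_2)-F(z_1),z_2-z_1\rangle+\mu\|z_2-z_1\|^2$ gives the claim. Your diagnosis of the cross-term obstruction is also apt: the four-corner pairing ensures every inequality compares $f$ at two points sharing a coordinate, so no control of the mixed second derivatives is ever required, and the same remark explains why your alternative reduction (shifting $f$ by $\mp(\mu/2)\|\cdot\|^2$ and invoking monotonicity of the convex-concave saddle operator) merely relocates, rather than avoids, this computation. What your route buys is self-containedness — Theorem \ref{thm:SC}'s analysis then rests on Assumption \ref{ass:Sc-Sc} alone rather than on an external reference — at the cost of a few extra lines; what the paper's citation buys is brevity, with the proof in \cite{zhou2018fenchel} being, in essence, the same gradient-inequality computation you describe.
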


\begin{proof} \textbf{(Theorem~\ref{thm:SC})}
Similar to the proof of the Theorem~\ref{thm:CC}, we will condition on the clean event $\mathcal{H}= \bigcap_{t\in[T]} \mathcal{H}_t$, where $\mathcal{H}_t$ is as defined in Lemma \ref{lemma:high_prob}. From Lemma~\ref{lemma:basic}, we have
 \begin{equation}
\resizebox{1\hsize}{!}{$
    \begin{aligned}
    2\eta\langle  \tilde G(\hat z_t),z_{t+1}-z\rangle\leq \|z-{z}_t\|^2-\|z-{z}_{t+1}\|^2-\|{z}_{t+1}- z_t\|^2\\
     2\eta\langle \tilde G(z_t),\hat{z}_{t}-z_{t+1}\rangle\leq \|z_{t+1}-{z}_t\|^2-\|\hat z_t-{z}_{t+1}\|^2-\|{z}_{t}-\hat z_t\|^2.
    \end{aligned}
$}
\label{eq:sc eg 2 Faulty}
\end{equation}
If we let $z= z^{*}$ in the first inequality above, we obtain:
 \begin{equation}
 \resizebox{1\hsize}{!}{$
     2\eta\langle  \tilde G(\hat z_t), z_{t+1}- z^{*}\rangle\leq \| z^{*}-{ z}_t\|^2-\| z^{*}-{ z}_{t+1}\|^2-\|{ z}_{t+1}-  z_t\|^2.
     $}
 \label{eq:sc eg 3 Faulty}
 \end{equation}
From Lemma~\ref{lemma:high_prob}, we also know that on the clean event $\mathcal{H}$, the following holds:
\begin{equation}
      \| F(\hat z_t)- \tilde G(\hat z_t)\|\leq \sqrt{2}\Delta,\quad
      \| F( z_t)- \tilde G( z_t)\|\leq\sqrt{2} \Delta,
\label{eq:Faulty_bnd_1}
 \end{equation}
where $\Delta$ is as in Eq. \eqref{eqn:Delta}. Now using the Cauchy–Schwarz inequality, we obtain  
\begin{equation}
 \begin{aligned}
     2\eta\langle   F(\hat z_t),z_{t+1}-z^{*}\rangle-2\eta\langle  \tilde G(\hat z_t),z_{t+1}-z^{*}\rangle\leq 4\eta \Delta D,\\
     2\eta\langle  F(z_t),\hat{z}_{t}-z_{t+1}\rangle-2\eta\langle \tilde G(z_t),\hat{z}_{t}-z_{t+1}\rangle \leq 4\eta\Delta D.
 \end{aligned}
 \nonumber
\end{equation}
 Combining the above display with  \eqref{eq:sc eg 2 Faulty}, \eqref{eq:sc eg 3 Faulty}, and \eqref{eq:Faulty_bnd_1} yields:
 \begin{equation}
  \label{eq:sc eg 4 Faulty}
\resizebox{1\hsize}{!}{$
    \begin{aligned}
     &2\eta\langle F(z_t),\hat{z}_{t}-z_{t+1}\rangle+2\eta\langle F(\hat z_t),z_{t+1}-z^{*}\rangle \\&\leq \|z^{*}-{z}_t\|^2-\|z^{*}-{z}_{t+1}\|^2-\|\hat z_t-{z}_{t+1}\|^2-\|{z}_{t}-\hat z_t\|^2+8\eta\Delta D.
     \end{aligned}
$}
\end{equation}
Our immediate goal is to obtain a lower bound for the LHS of the above inequality. 
To that end, we start by using the fact that if $z^* \in \mathcal{Z}$ is a saddle point of $f(x,y)$, then  $\forall z\in \mathcal{Z}$, we have (see Section 2.3  in \cite{gidel2018variational}): 
\begin{equation}
    \langle F(z^*),z-z^*\rangle \geq 0.
\end{equation}
 This readily implies that
 \begin{equation}
  \label{eq:sc eg 4.25 Faulty}
\resizebox{1\hsize}{!}{$
    \begin{aligned}
    2\eta&\langle  F( z_{t}),\hat z_{t}- z_{t+1}\rangle +2\eta\langle F(\hat z_t), z_{t+1}- z^{*}\rangle\\= &2\eta\langle  F( z_{t})-F(\hat z_t),\hat z_t- z_{t+1}\rangle +2\eta\langle F(\hat z_t),\hat z_t- z^{*}\rangle
    \\ \geq &2\eta\langle  F( z_{t})-F(\hat z_t),\hat z_t- z_{t+1}\rangle +2\eta\langle F(\hat z_t)-F(z^*),\hat z_t- z^{*}\rangle.
     \end{aligned}
$}
\end{equation}
We can further lower-bound the RHS of the above inequality as follows:
  \begin{equation}
 \resizebox{1\hsize}{!}{$
  \begin{aligned}
     &2\eta\langle  F( z_{t})-F(\hat z_t),\hat z_t- z_{t+1}\rangle +2\eta\langle F(\hat z_t)-F(z^*),\hat z_t- z^{*}\rangle
     \\\overset{(a)}\geq& 2\eta\langle  F( z_{t})-F(\hat z_t),\hat z_t- z_{t+1}\rangle+2\eta\mu\|\hat z_t- z^{*}\|^2
     \\\overset{(b)}\geq& -4\eta L\| z_{t}-\hat z_t\| \|\hat z_t- z_{t+1}\|+2\eta\mu\|\hat z_t- z^{*}\|^2
     \\\overset{(c)}\geq&-(4\eta^2 L^2\| z_{t}-\hat z_t\|^2 +\|\hat z_t- z_{t+1}\|^2)+2\eta\mu\|\hat z_t- z^{*}\|^2,
 \end{aligned}
 $}
\label{eqn:Thm3lbound}
\end{equation}
where we used Lemma~\ref{lemma:Sc-Sc} for (a);  the smoothness property of $f$ in  Assumption~\ref{ass:smoothness} for (b); and the AM-GM inequality for (c). Combining equations \eqref{eq:sc eg 4 Faulty}, \eqref{eq:sc eg 4.25 Faulty}, and \eqref{eqn:Thm3lbound}, we obtain 
\begin{equation}
 \label{eq:scsc proof comb}
\resizebox{1\hsize}{!}{$
 \begin{aligned}
     &\|  z^{*}-z_{t+1}\|^2
     \\&\leq
     \| z^{*}-z_{t}\|^2-2 \eta\mu\|\hat z_t- z^{*}\|^2+( 4\eta^2L^2-1)\| z_{t}-\hat z_t\|^2+8 \eta\Delta D
     \\&=(1- \eta\mu)\|  z^{*}-z_{t}\|^2+(-1+ 4\eta^2L^2+2 \eta\mu)\| z_{t}-\hat z_t\|^2
     \\&\quad+ \eta\mu\| z^{*}-z_{t}\|^2-2 \eta\mu\|\hat z_t- z^{*}\|^2-2 \eta\mu\| z_{t}-\hat z_t\|^2+8 \eta\Delta D.
 \end{aligned}
$}
\end{equation}
To simplify the above display, we use 
the elementary fact that for any two vectors $a$ and $b$, it holds that ${\Vert a + b \Vert}^2 \leq 2 \left(\Vert a \Vert^2 + \Vert b \Vert^2\right)$; this yields:
\begin{align}
 \eta\mu\| z_{t}- z^{*}\|^2-2 \eta\mu\|\hat z_t- z^{*}\|^2-2 \eta\mu\| z_{t}-\hat z_t\|^2\leq0.
\label{eqn:simp1}
\end{align}
Next, setting $\eta=\frac{1}{4L}$, and using $\mu\leq L$, we have:
\begin{align}
    -1+ 4\eta^2L^2+2 \eta\mu\leq-1+ 4\eta^2L^2+2 \eta L\leq0.
\label{eqn:simp2}
\end{align}
Using \eqref{eqn:simp1} and \eqref{eqn:simp2} to simplify \eqref{eq:scsc proof comb}, we finally obtain:
  \begin{align}
     \| z^{*}-z_{t+1} \|^2&\leq
     (1- \eta\mu)\|z^{*}- z_{t}\|^2+8 \eta\Delta D\notag\\
     &=\left(1-\frac{1}{4\kappa}\right)\| z^{*}-z_{t}\|^2 +2 \frac{\Delta D}{L},
 \end{align}
where $\kappa=\mu/L$. Using the above inequality recursively yields:
\begin{equation}
\resizebox{1\hsize}{!}{$
\begin{aligned}
     \| z^{*}-z_{T+1} \|^2&\leq
   \left(1-\frac{1}{4\kappa}\right)^T\| z^{*}-z_{1}\|^2 +2\sum_{t=0}^{T-1}\left(1-\frac{1}{4\kappa}\right)^t \frac{\Delta D}{L}
   \notag\\&\leq
   \left(1-\frac{1}{4\kappa}\right)^T\| z^{*}-z_{1}\|^2 +8 \frac{\kappa\Delta D}{L}
   \notag\\
     &\leq e^{-\frac{T}{4\kappa}}\| z^{*}-z_{1}\|^2 +8 \frac{\kappa\Delta D}{L},
 \end{aligned}
 $}
\end{equation}
where for the last step, we used $(1-w)^r \leq e^{-rw}$ for $w\in (0,1)$, and $r \geq 0$. To conclude, we recall from the proof of Theorem \ref{thm:CC} that the event $\mathcal{H}$ has measure at least $1-1/T$. 
 \end{proof} 
 \section{Simulations}
\label{sec:sims}
 In this section, we study a specific instance of problem \eqref{eqn:objective}, namely, a bilinear game of the following form: 
 \begin{equation}
     \min_{\Vert x \Vert \leq \rho} \max_{\Vert y \Vert \leq \rho}f(x,y) \triangleq \mathbb{E}[x^TAy +2(b+\zeta)^Tx-2(c+\zeta)^T y].
\nonumber
 \end{equation}
Here, $x,y,b,c\in \mathbb{R}^{10}$, $A\in \mathbb{R}^{10\times 10}$, and $\rho=100$. The parameters $A,b,c$ are fixed, and  $\zeta\sim N(0,\sigma^2 I)$. As our measure of performance, we consider the instantaneous  primal-dual gap  ${\phi_t=}\max_{y\in \mathcal{Y}} f(\bar{x}_t,y)-\min_{x\in\mathcal{X}}f(x,\bar{y}_t)$. We simulate two algorithms: the vanilla extra-gradient algorithm that does not account for adversaries, and the proposed  \texttt{RDEG} algorithm. In Fig.~\ref{fig:allfig}(a), we plot the performance of these algorithms with the corruption fraction set to $\alpha=0.06$, number of agents $M=100$, and variance of the noise set to $\sigma^2=10$. We observe that even a small number of Byzantine workers can manipulate the optimization procedure and cause the extra-gradient algorithm to diverge from the saddle point. In Fig.~\ref{fig:allfig}(b), with $M=100$ and  $\sigma^2=10$, we explore the impact of varying the corruption fraction $\alpha$. Complying with Theorem \ref{thm:CC}, the error floor of \texttt{RDEG} increases as a function of $\alpha$. Next, in Fig.~\ref{fig:allfig}(c), to demonstrate the benefit of collaboration, 
 we fix $\alpha=0.06$ and $\sigma^2=10$, and plot the performance of \texttt{RDEG} as a function of the number of agents $M$. As expected, by increasing $M$, \texttt{RDEG} converges to a smaller ball around the saddle point, highlighting the benefit of collaboration in reducing the variance of the noise model. Finally, in Fig.~\ref{fig:allfig}(d), we fix $M=100$ and $\alpha=0.06$, and change the variance of the noise $\sigma^2$. We observe that increasing $\sigma^2$ leads to a higher error-floor. 
 Importantly, all of the above plots verify the bound in Theorem~\ref{thm:CC}.
 \begin{figure}[t]

\begin{minipage}{0.245\textwidth}
     \begin{figure}[H]
        \includegraphics[width=\linewidth]{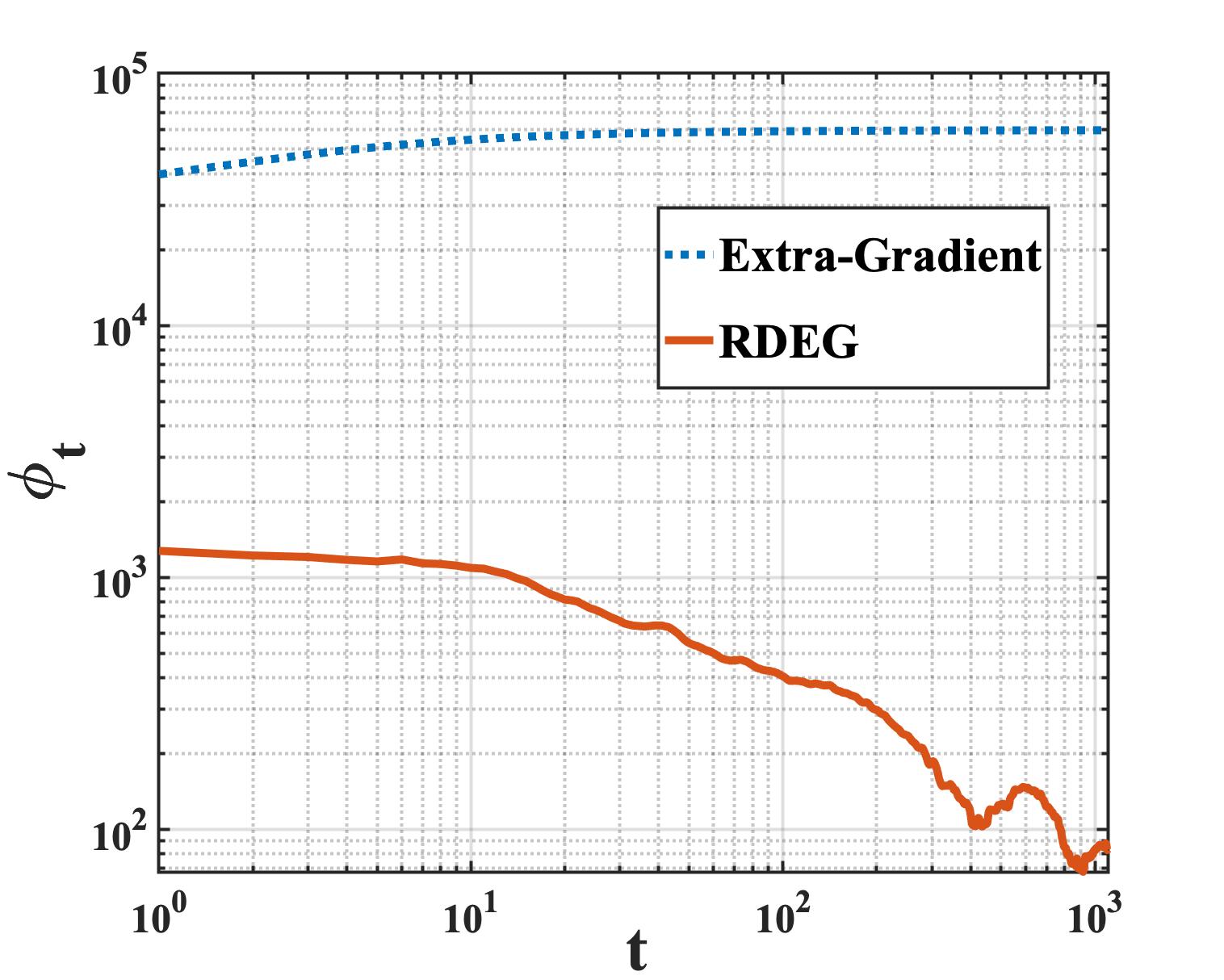}
        \label{fig:syn1}  
    \end{figure}
    \end{minipage}
    \hspace{-4mm}
\begin{minipage}{0.245\textwidth}
    \begin{figure}[H]
        \includegraphics[width=\linewidth]{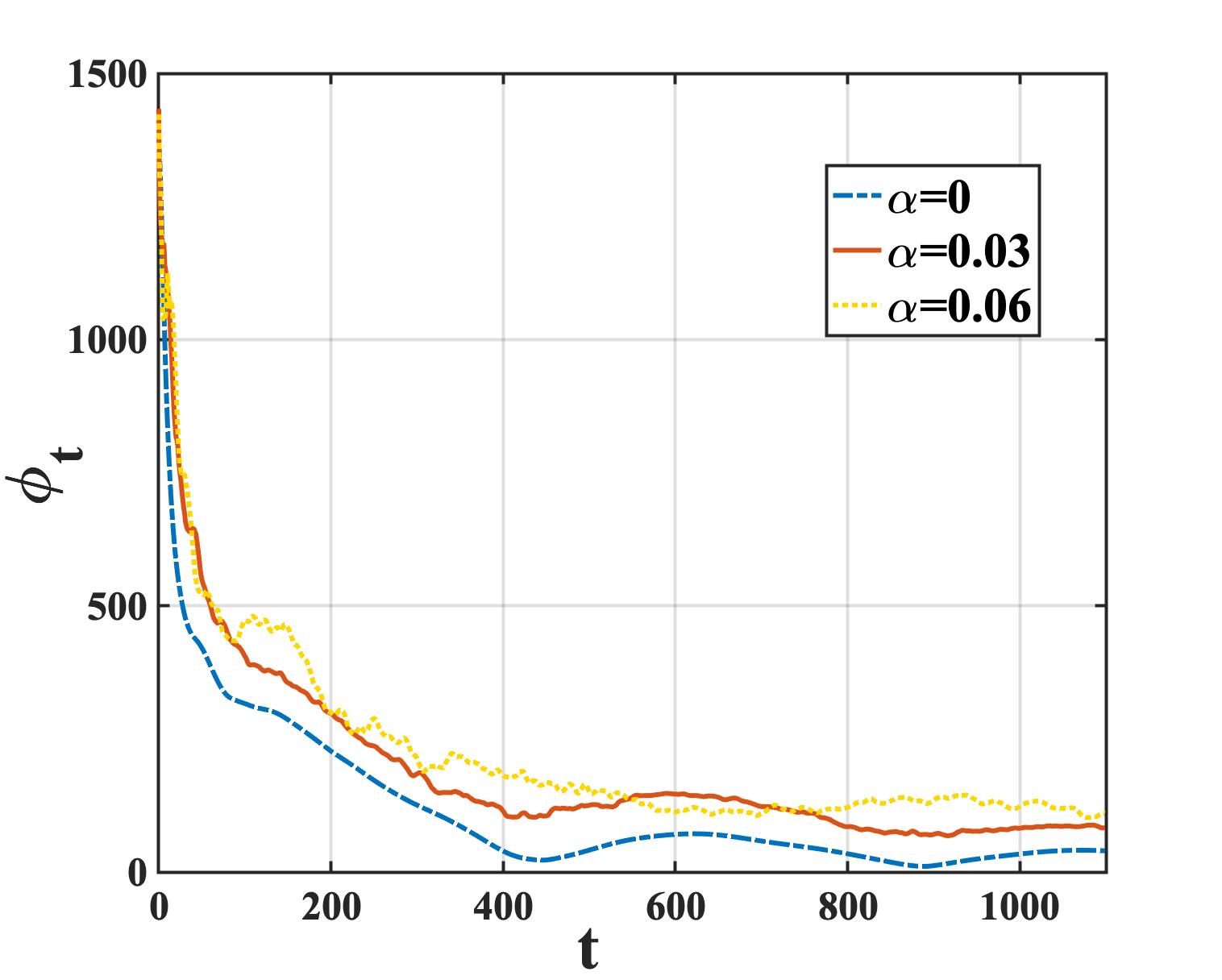}
        \label{fig:synlog2}
    \end{figure} %
    \end{minipage}
    \hspace{-4mm}
    \vspace{-3mm}
    
    \vspace{-5mm}

\begin{minipage}{0.245\textwidth}
     \begin{figure}[H]
        \includegraphics[width=\linewidth]{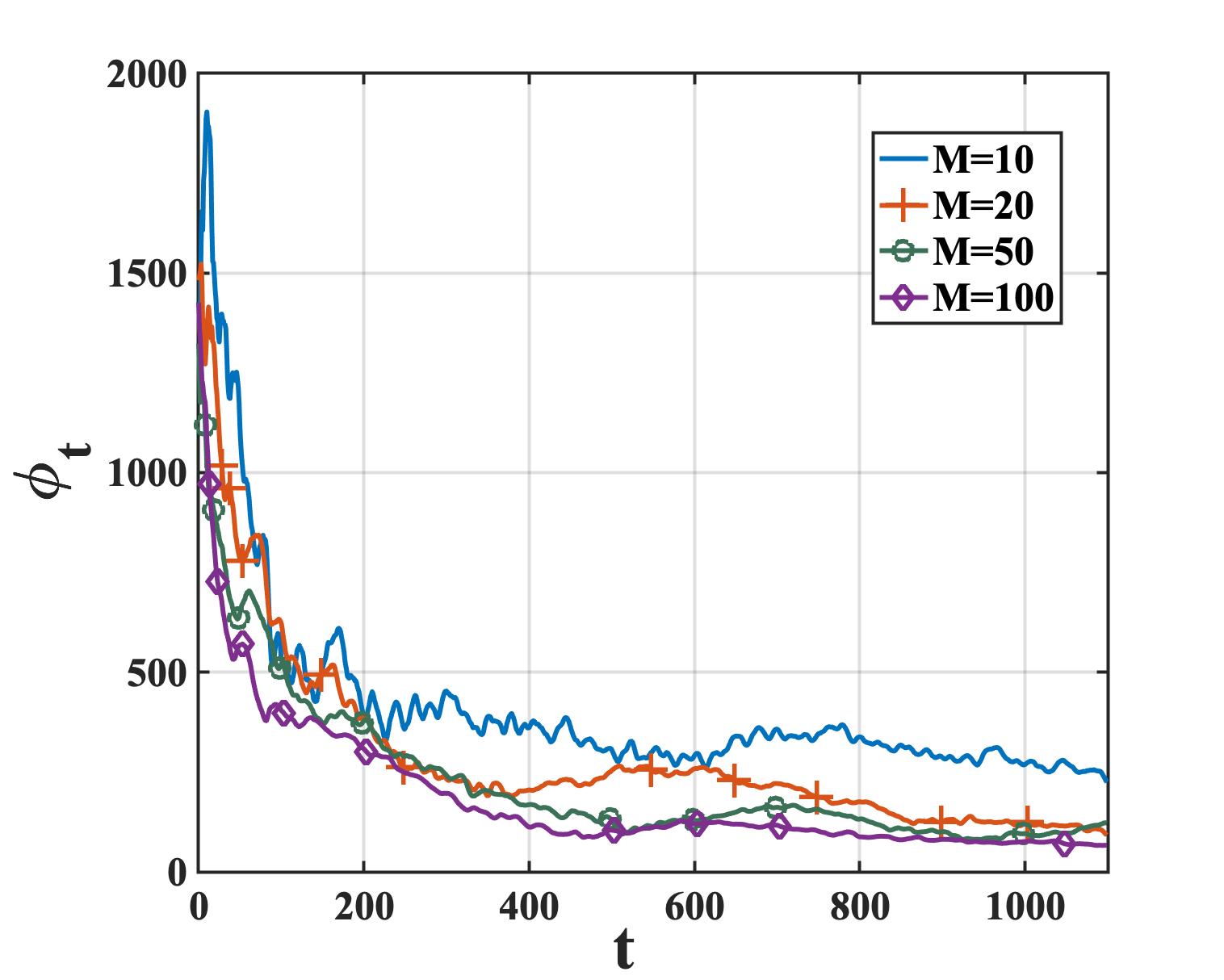}
        \label{fig:syn}  
    \end{figure}
    \end{minipage}
    \hspace{-4mm}
\begin{minipage}{0.245\textwidth}
    \begin{figure}[H]
        \includegraphics[width=\linewidth]{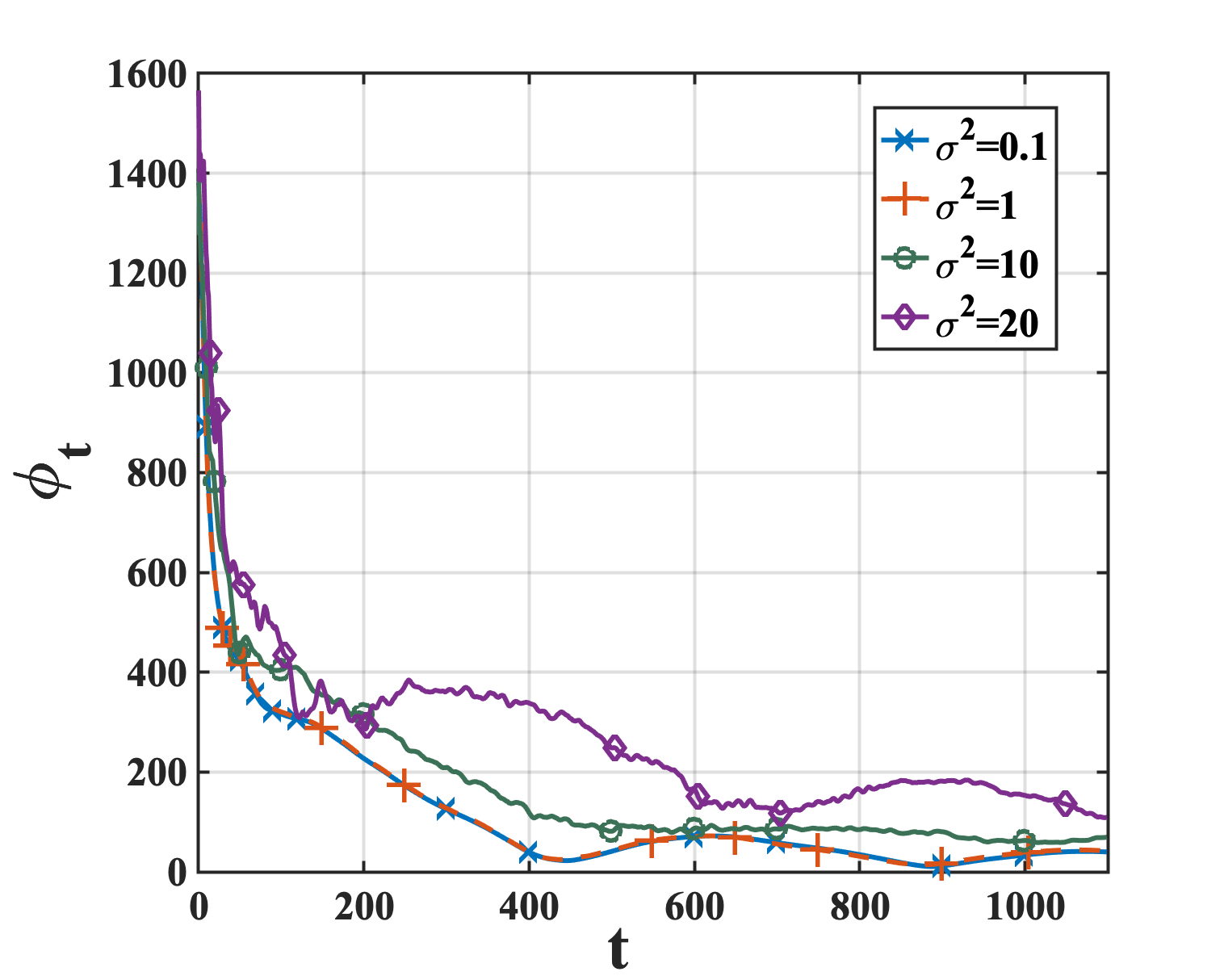}
        \label{fig:synlog}
    \end{figure} %
    \end{minipage}
    \hspace{-4mm}
   
    \vspace{-3mm}
    
    \caption{\textbf{Top Left (a). }Comparison between vanilla  extra-gradient and RDEG. \textbf{Top Right (b).} Performance of RDEG vs. level of corruption fraction. \textbf{Bottom Left  (c).} Performance of RDEG vs. number of agents. \textbf{Bottom Right  (d).} Performance of RDEG vs. level of noise variance.}\label{fig:allfig}
    \vspace{-2mm}
\end{figure}

\section{Conclusion}
We studied the problem of distributed min-max learning under adversarial agents for the first time. By exploiting recent ideas from robust statistics, we developed a novel  robust distributed extra-gradient algorithm. For both smooth convex-concave and smooth strongly convex-strongly concave functions, we showed that with high probability, our proposed approach guarantees convergence to approximate saddle points at near-optimal statistical rates. 

\bibliographystyle{IEEEtran} 
\bibliography{refs}
\end{document}